\newcommand{\N}{\mathcal{N}}
\newcommand{\GP}{\mathcal{GP}}
\newcommand{\bigO}{\mathcal{O}}
\newtheorem{theorem}{Theorem}[section]
\newtheorem{lemma}[theorem]{Lemma}
\newcommand{\john}[1]{{{\textcolor{black}{#1}}}}
\title{The Fast Kernel Transform}
\author{%
  John Paul Ryan\\
  Department of Computer Science\\
  Cornell University\\
  Ithaca, NY 14853\\
  \texttt{johnryan@cs.cornell.edu} \\
   \And
   Sebastian Ament \\
   Department of Computer Science \\
   Cornell University \\
     Ithaca, NY 14853\\
   \texttt{sea79@cornell.edu} \\
   \AND
   Carla P. Gomes \\
   Department of Computer Science \\
   Cornell University \\
     Ithaca, NY 14853\\
   \texttt{gomes@cs.cornell.edu} \\
   \And
   Anil Damle \\
   Department of Computer Science \\
   Cornell University \\
    Ithaca, NY 14853\\
   \texttt{damle@cornell.edu} 
}
\begin{document}

\maketitle

\begin{abstract}
Kernel methods are a highly effective and widely used collection of modern machine learning algorithms.
A fundamental limitation of virtually all such methods are computations involving the kernel matrix that na\"ively scale quadratically (e.g., constructing the kernel matrix and matrix-vector multiplication) or cubically (solving linear systems) with the size of the data set $N.$ 
We propose the Fast Kernel Transform (FKT), a general algorithm to compute matrix-vector multiplications (MVMs) for datasets in moderate dimensions with quasilinear complexity.
Typically, analytically grounded fast multiplication methods require specialized development for specific kernels. In contrast, our scheme is based on auto-differentiation and automated symbolic computations that leverage the analytical structure of the underlying kernel. 
This allows the FKT to be easily applied to a broad class of kernels, including Gaussian, Mat\'ern, and Rational Quadratic covariance functions and 
physically motivated Green's functions, including those of the Laplace and Helmholtz equations. Furthermore, the FKT maintains a high, quantifiable, and controllable level of accuracy---properties that many acceleration methods lack.
We illustrate the efficacy and versatility of the FKT by providing timing and accuracy benchmarks and by applying it to scale the stochastic neighborhood embedding (t-SNE) and Gaussian processes to large real-world data sets.
\end{abstract}

\section{Introduction}

Kernel methods are fundamental to machine learning and many of its applications. Examples include kernel density estimation,
kernel regression, Gaussian processes, support vector machines,
kernel clustering, and kernel PCA~\citep{shawe2004kernel, scholkopf2018learning}.
While these methods are highly expressive by computing with an infinite-dimensional feature space using the ``kernel trick,''
most methods require solving linear systems with the kernel matrix---an operation that scales cubically with the number of data points.
This is prohibitively expensive for increasingly large modern data sets and fundamentally limits the applicability of kernel methods.

To remedy this, a large number of methods have been developed that accelerate operations involving kernel matrices. Typically, these methods provide faster matrix vector products and may be paired with classical iterative methods to solve the necessary linear systems. For example, in the machine learning community, a popular approach is the Nystr\"om method, which constructs a low-rank approximation based on a random sample of a kernel matrix's columns \citep{williams2001using, drineas2005nystrom, kumar2009ensemble, kumar2012sampling}.
In the context of Gaussian Process (GP) regression, \citet{snelson2005sparse} introduced inducing inputs, leading to $\mathcal{O}(Nm^2)$ runtime for $N$ data points and $m$ inducing inputs.
In Section~\ref{sec:methods} we develop a new scheme for this problem based on analytical expansions which can be readily applied to a broad range of kernel functions that arise in a diverse set of applications---a feature we highlight in Section~\ref{sec:experiments}.

In scientific computing and applied mathematics, a large body of work concerns the acceleration of physical simulations in which the force two particles exert on each other is modeled by a kernel function,
like the inverse-square law $\sim 1/\|\mathbf{x} - \mathbf{ y}\|^2$ for gravitational and electromagnetic forces.
Famously,
\citet{greengard1987fast} introduced the Fast Multipole Method (FMM), which provides  linear-time computation of approximate matrix-vector multiplications with certain Green's function kernel matrices based on analytical expansions. The Fast Gauss Transform (FGT) \citep{greengard1991fast} 
applied similar analysis to the Gaussian kernel, and was subsequently improved to enable efficient computations in higher dimensions~\citep{yang2003improved}
and applied to kernel-based machine learning methods
\citep{yang2004efficient}.
Importantly, in these cases it is possible to derive concrete error bounds based on the analytical expansions.
However, extending these methods relies on extensive work per kernel and is dependent on finding/developing appropriate analytical expansions. In contrast, our method leverages a new general analytical expansion to allow for immediate application to a variety of kernels. Even with this generality, we are still able to provide bounds and computational complexity analysis in Section~\ref{sec:analysis} that is experimentally demonstrated in Section~\ref{sec:experiments}. 

\paragraph{Contribution}
In this work, we propose the Fast Kernel Transform (FKT),
an algorithm that allows for matrix-vector multiplication with kernel matrices in $\mathcal{O}(N \log N)$ operations and is applicable to any isotropic kernel which is analytic away from the origin and any dataset in moderate dimensions.
The FKT achieves this combination of computational efficiency and broad applicability by leveraging a new general analytical expansion introduced herein, which is implemented in Julia using modern computer algebra and auto-differentiation technologies and is provided open-source. 
We demonstrate the FKT's scaling on synthetic data and apply it to stochastic neighborhood embedding (t-SNE) and Gaussian process regression using real-world oceanographic data to highlight the method's versatility.

\section{Prior Work}
\label{sec:prior}
Algorithms that compute (approximate) matrix vector products with kernel matrices have a long history and include algorithms of various flavors. Simplistically, these methods either leverage a regular grid in the underlying domain or adaptive decompositions, and either use analytical expansions for kernel functions or purely computational schemes for compression. Concretely, our FKT leverages adaptive decompositions and a semi-analytic scheme for compressing long-range interactions.

\paragraph{Adaptive Methods}
The need for fast summation methods in N-body problems for unstructured data (i.e., matrix vector products with specific kernels) drove the development of methods that take advantage of two key features: (1) adaptive decompositions of the underlying spatial domain and (2) the ability to compress interactions between points that are well separated. This led to the development of the Barnes-Hut algorithm~\citep{barnes1986hierarchical} and the FMM~\citep{greengard1987fast} for computing matrix vector products. While the FMM attains $\bigO(N)$ scaling (with a constant that depends mildly on the desired accuracy), it explicitly leverages an analytical expansion for the underlying kernel and associated translation operators. Therefore, extending the algorithm to additional kernels requires extensive work. This has been done for e.g. the Helmholtz kernel via the use of Bessel and Hankel functions~\citep{fmmhelm}.

To expand the applicability of these adaptive methods to more general kernels, numerical schemes were developed to compress long-range kernel interactions. These schemes led to algorithms such as the kernel independent FMM~\citep{ying2004kernel,ying2006kernel} and, more generally, so-called rank-structured factorizations and fast direct methods for matrices (see, e.g.,~\citep{martinsson2019fast} for an overview of these methods in the context of integral equations). Moreover, these methods have been successfully applied to Gaussian Process regression~\citep{borm2007approximating,ambikasaran2015fast,minden2017fast}. While broadly applicable, these methods can be sub-optimal if analytical expansions for kernel functions are available, as they rely on algebraic factorizations such as the interpolative decomposition~\citep{idfact}.

\paragraph{Grid-Based Methods}
For certain data distributions it can be advantageous to leverage regular grids on the computational domain to accelerate matrix vector products (and/or build effective pre-conditioners). Notably, if the observation points lie on a regular grid and the kernel function has certain structural properties it is possible to leverage the Fast Fourier Transform (FFT) to compute matrix vector products in $\bigO(N\log N)$ time. However, observation points typically do not lie precisely on a regular grid. The so-called pre-corrected FFT~\citep{phillips1994precorrected,white1994comparing} solves this problem by incorporating aggregation and interpolation operators to allow for computations using a regular grid that are then accelerated by the FFT. An analogous method called structured kernel interpolation (SKI) is popular within the Gaussian Process community~\citep{wilson2015kernel} as an acceleration of the so-called inducing point method~\citep{snelson2005sparse}.



\section{The Fast Kernel Transform}
\label{sec:methods}
We are interested in computing the matrix-vector product
\begin{equation}\label{eq:nbody}
z_i = \sum_{j=0}^N K(|\mathbf{r}_i-\mathbf{r}_j|)y_j.
\end{equation} 
where $y$ is a given vector of real or complex numbers, $\mathbf{r}_i\in \mathbb{R}^d$ for $i=0,\ldots,N$, and $K$ is an isotropic kernel. Henceforth, we will overload notation to say that $K_{ij}\coloneqq K(|\mathbf{r}_i-\mathbf{r}_j|)$ and~\eqref{eq:nbody} can be written as $z=Ky$. The technique we propose is based on the famous Barnes-Hut~\citep{barnes1986hierarchical} style of tree-code algorithm. A tree decomposition is performed of the space containing the dataset's points, and for each tree node, we compute a set of distant points whose kernel interactions with the node's points can be compressed. In the original Barnes-Hut scheme, this compression is done by summing interactions with the center of mass\textemdash in our scheme we generalize this to a new multipole expansion which can more accurately represent the points inside the node. Compressing these interactions will produce low-rank approximations for large off-diagonal blocks of the kernel matrix, yielding an efficient matrix multiplication algorithm. We review each of these components in the following sections.
\subsection{Tree decomposition}

We use a decomposition inspired by the binary partitioning of the $k$-d tree \citep{kdtree}. This scheme begins with a single hypercube root node containing all points, and iteratively splits nodes in half via axis-aligned separating hyperplanes. At each split the hyperplane is chosen to (a) split the node in half, (b) keep the aspect ratio (the maximum ratio between pairs of node side lengths) below two, and (c) optimally divide the points evenly while satisfying the first two constraints. These qualities are chosen to encourage hyperrectangular nodes with minimal aspect ratio while maintaining the divide-and-conquer nature of binary space partitionings commonly applied in this domain. When a node contains fewer than some prescribed threshold of points, it is not split and becomes a leaf node.
An example of this decomposition is shown in Figure~\ref{fig:domain}.

Once a domain decomposition is computed, our algorithm requires, for every tree node $i$, a set $F_i$ of far points which are far enough from the node to allow accurate compression, and such that $F_i\cap F_j=\emptyset$ if $i$ is a descendent of $j$. Throughout this work we use the following condition for `far enough':
\begin{equation}\label{eq:accparam}
\max_{\mathbf{r}'\in \mathrm{node}}|\mathbf{r}'-\mathbf{r}_c|/|\mathbf{r}-\mathbf{r}_c|<\theta
\end{equation}
where $\mathbf{r}_c$ is the center of the relevant node. If $\mathbf{r}$ satisfies this inequality, it is judged to be far enough away for compression. The distance parameter $\theta$ may then be varied to trade-off accuracy and computation time.

\subsection{Fast Matrix-Vector Multiplication}
Once the sets of far points are generated for all nodes, the FKT proceeds as described in Algorithm~\ref{alg:fact}. For each node $i$, we use a low-rank approximation of the kernel to compute interactions between points in the node and those in the $F_i$. Furthermore, for each leaf $l$ we use exact dense computations for interactions between points in the leaf and its nearby points $N_l$, where $N_l$ is defined to be all points such that $N_l\cap F_j=\emptyset$ for all $j$ in the path from the leaf to the root. In summary the approximation is given by 
\[z = Ky = 
\sum_{l\in\mathrm{leaves}} K_{{N_l},l}*y_{l} +
\sum_{b\in\mathrm{nodes}} K_{{F_b},b}*y_{b}
\approx
\sum_{l\in\mathrm{leaves}} K_{{N_l},l}*y_{l} +
\sum_{b\in\mathrm{nodes}} \overline{K}_{{F_b},b}*y_{b},\]
where $K_{N_l,l}$ is the submatrix of $K$ whose columns correspond to points in the leaf node $l$ and whose rows correspond to points in the near field $N_l$ of the leaf node $l$,  $K_{F_b,b}$ is the analogous submatrix for any node $b$ and its far field $F_b$, $y_l$ and $y_b$ are the subvectors of $y$ corresponding to the points in the leaf $l$ or box $b$ respectively, 
and $\overline{K}_{{F_b}b}$ is a low rank approximation to the typically large $K_{{F_b}b}$. In Algorithm~\ref{alg:fact}, $s2m$ and $m2t$ refer to ``source-to-multipole'' and ``multipole-to-target'' matrices respectively, and collectively form the low-rank approximation $\overline{K}_{{F_b}b}$.
\begin{figure}[t]
\hspace*{\fill}
\begin{minipage}{.49\columnwidth}
\begin{figure}[H]
\vskip 0.2in
\centering
\centerline{
\includegraphics[width=.85\columnwidth, height=.8\columnwidth]
{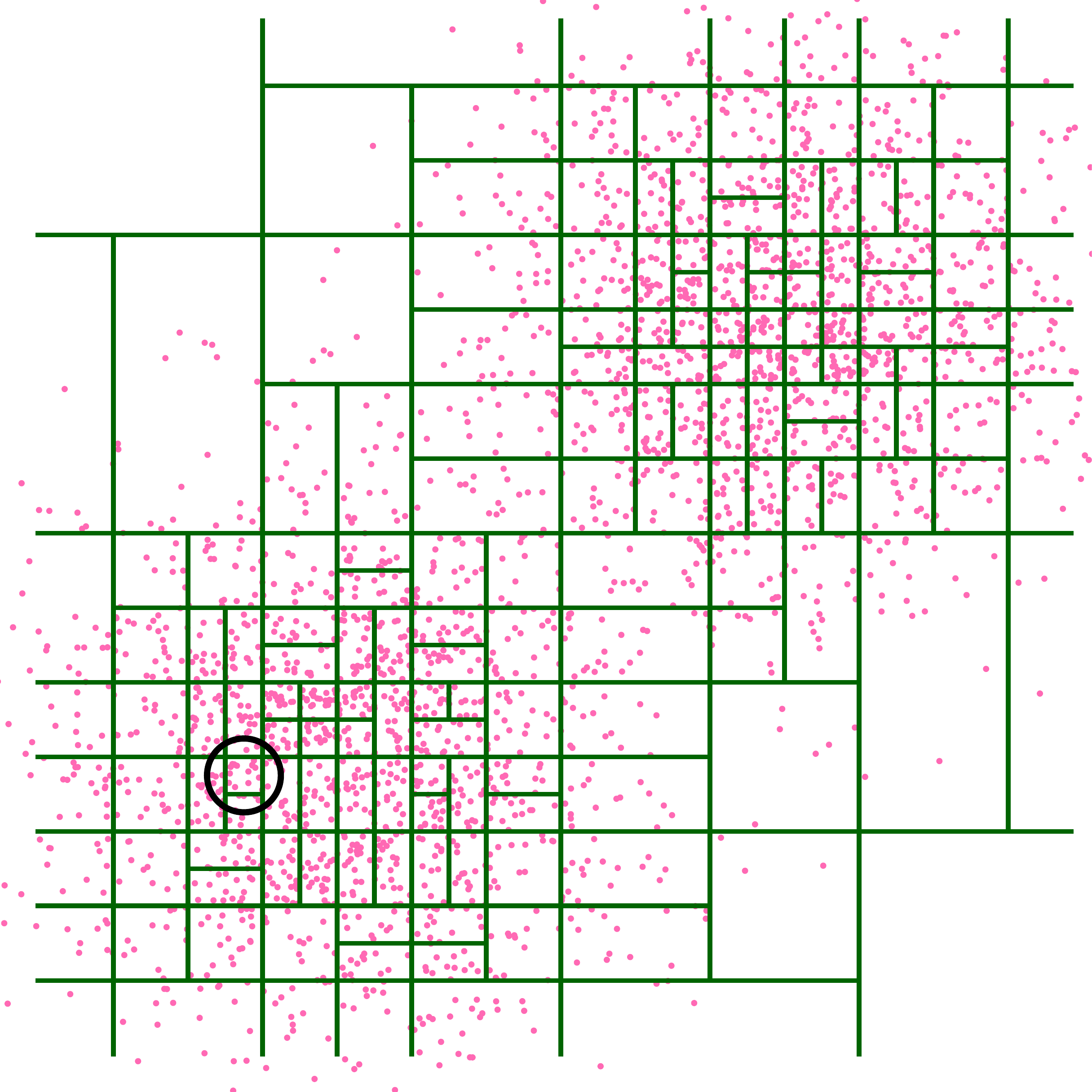}}
\caption{2D domain decomposition on points from a Gaussian mixture. Points outside the circle are considered distant enough for compression with the circled box, for a certain $\theta$ in \eqref{eq:accparam}.}
\label{fig:domain}
\end{figure}
\end{minipage}
\hfill
\begin{minipage}{.48\columnwidth}
\begin{algorithm}[H]
\caption{Barnes-Hut with Multipoles}
\begin{algorithmic}[H]
\STATE{tree $\gets$ BinarySpacePartitioning(points)}
\STATE{$z\gets 0$}
\FOR{$n \in$ tree.nodes}
    \STATE{\COMMENT{Compute compressed far interactions.}}
        \STATE{$s2m$ $\gets$ source2mult($n$)}
        \STATE{$m2t$ $\gets$ mult2target($n$)}
        \STATE{$z$[$n$.far] $\mathrel{{+}{=}} m2t * (s2m * y$[$n$.indices])}
        \IF{isleaf($n$)}
        \STATE{\COMMENT{Compute nearby dense interactions.}}
    \STATE{near\_mat $\gets K(n$.near, $n$.indices)}
    \STATE{$z$[$n$.near] $\mathrel{{+}{=}}$ near\_mat $*y$[$n$.indices]}
    \ENDIF
\ENDFOR
\STATE{\textbf{return} $z$}
\end{algorithmic}\label{alg:fact}
\end{algorithm}
\end{minipage}
\hspace*{\fill}
\end{figure}


\subsection{Low-rank kernel approximations}
Given the preceding approach, the key to a fast algorithm is the availability of a sufficiently accurate low-rank approximation $\overline{K}_{{F_b},b}\approx K_{{F_b},b}$ valid when the sets $F_b$ and $b$ contain well-separated points. Our approach to building these approximations is inspired by multipole methods (specifically the FMM~\citep{greengard1987fast}) for solving the N-body problem~\eqref{eq:nbody} when $K$ is the electrostatic potential.
If $\lvert b \rvert = M$ and $\lvert F_b\rvert=N,$ multiplying by this matrix requires $\mathcal{O}(MN)$ work. However, if we have access to a low rank approximation 
\begin{equation}\label{eq:lowrank}
K(|\mathbf{r}_i-\mathbf{r}_j|) \approx \sum_{k=0}^\mathcal{P} U_k(\mathbf{r}_i)V_k(\mathbf{r}_j)
\end{equation}
valid for $i\in b$ and $j\in F_b$ we can use it to accelerate the computation.
Specifically, using~\eqref{eq:lowrank} we can rewrite \eqref{eq:nbody} as
\[
y_i \approx \sum_{j=0}^N \sum_{k=0}^\mathcal{P} U_k(\mathbf{r}_i)V_k(\mathbf{r}_j)
x_j = 
\sum_{k=0}^\mathcal{P}
U_k(\mathbf{r}_i)
\sum_{j=0}^N  
V_k(\mathbf{r}_j)
x_j 
\]
and the two sums may be computed in $\mathcal{O}(\mathcal{P}(M+N))$ time. In this case, the $V_k$ sum corresponds to the $s2m$ matrix in Algorithm~\ref{alg:fact} and the $U_k$ sum corresponds to the $m2t$ matrix.

For example, let $\mathbf{r'},\mathbf{r}\in\mathbb{R}^3$ with $r' \coloneqq|\mathbf{r'}| < r\coloneqq |\mathbf{r}|$.
A classic example of an expansion of the form in \eqref{eq:lowrank} which is low rank for well-separated points is the multipole expansion of the electrostatic potential
\[
K(|\mathbf{r}'-\mathbf{r}|) = 
\frac{1}{|\mathbf{r}'-\mathbf{r}|}
=\frac{1}{r\sqrt{1+\frac{r'}{r}(\frac{r'}{r}-2\cos{\gamma})}}
\]
where $\gamma$ is the angle between $\mathbf{r'}$ and $\mathbf{r}$. 
Expanding in powers of $\frac{r'}{r}$ yields the expansion in Legendre polynomials
\begin{equation}\label{eq:electro}
K(|\mathbf{r}'-\mathbf{r}|) =\frac{1}{r} \sum_{k=0}^\infty \left(\frac{r'}{r}\right)^kP_k(\cos{\gamma}). 
\end{equation}
This may be put into the form of \eqref{eq:lowrank} by splitting $P_k(\cos{\gamma})$ into functions of $\mathbf{r'}$ and $\mathbf{r}$ using the spherical harmonic addition theorem (see Sec. 12.8 in \citep{spherharm}).

 \begin{equation}\label{eq:spheradditiontheorem}
 \frac{2k+1}{4\pi}P_k(\cos{\gamma})
 = \sum_{h=-k}^k
 Y_k^h(\mathbf{r}')
 Y_k^h(\mathbf{r})^*
 \end{equation}
 
The FKT leverages modern computational tools to build analogous low-rank approximations for a broad class of kernels.

\subsection{The Generalized Multipole Expansion}\label{sec:expansion}
We build our new technique by developing an expansion for general kernels into separable radial and angular functions as in \eqref{eq:electro}. We begin by defining $\varepsilon \coloneqq \frac{r'}{r}\left(\frac{r'}{r} - 2\cos{\gamma}\right)$,
where $\gamma$ is again the angle between $\mathbf{r'}$ and $\mathbf{r}$. Then
$ K(|\mathbf{r}'-\mathbf{r}|) = K(r\sqrt{1+\varepsilon})$ by the law of cosines,
and, assuming $r>0$ and $K$ is analytic except possibly at the origin, we can form a Taylor expansion around $\varepsilon=0$
\begin{equation}\label{eq:taylor}
K(|\mathbf{r'}-\mathbf{r}|) = 
\sum_{n=0}^{\infty} \frac{\varepsilon^n}{n!} \frac{\partial^n}{\partial \varepsilon^n}K(r\sqrt{1+\varepsilon})_{\varepsilon=0}
.
\end{equation}
By expanding the $\varepsilon^n$ terms via the binomial theorem, transforming from powers of cosine into Gegenbauer polynomials of cosine (via an identity from \citet{averygegen}, given in the appendix in~\eqref{eq:coschange}), and using Faa di Bruno's theorem for the derivatives with respect to $\varepsilon$, this sum can be rewritten as an expansion in (hyper)spherical harmonics, as given by Theorem~\ref{thm:main}
\begin{theorem}
If $K$ is analytic except possibly the origin, then for $\mathbf{r}'$,$\mathbf{r}$ within the radius of convergence, 
\[
K(|\mathbf{r'}-\mathbf{r}|)
=
\sum_{k=0}^\infty
\sum_{h\in\mathcal{H}_k}
Y_k^h(\mathbf{r})Y_k^h(\mathbf{r}')^* \mathcal{K}^{(k)}(r',r)\] 
where 
\begin{equation}\label{eq:kdef}
\mathcal{K}^{(k)}(r',r)\coloneqq
 \sum_{j=k}^{\infty}
r'^j
\sum_{m=1}^{j}
K^{(m)}(r)
r^{m-j}\mathcal{T}_{jkm}^{(\alpha)},
 \end{equation}
and $\mathcal{T}_{jkm}^{(\alpha)}$ are constants which depend only on the dimension and not on the kernel or data. The radius of convergence is the same as that of~\eqref{eq:taylor}.
\label{thm:main}
\end{theorem}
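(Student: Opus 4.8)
The plan is to start from the $\varepsilon$-Taylor expansion~\eqref{eq:taylor} and rewrite each of its pieces until the dependence on $\mathbf r$ and $\mathbf r'$ separates into hyperspherical harmonics. First I would handle the derivatives in $\varepsilon$. Since $\varepsilon \mapsto r\sqrt{1+\varepsilon}$ is analytic with value $r$ at $\varepsilon=0$, Faà di Bruno's formula writes $\partial_\varepsilon^n K(r\sqrt{1+\varepsilon})|_{\varepsilon=0}$ as a finite sum of terms of the form (multinomial coefficient) $\times\, K^{(m)}(r)\, \times\, \prod_i \big(\partial_\varepsilon^{b_i}\sqrt{1+\varepsilon}\big|_0\big)$, indexed by partitions of $n$ into $m$ parts $b_1,\dots,b_m$; each factor $\partial_\varepsilon^{b_i}\sqrt{1+\varepsilon}|_0$ is a fixed rational number times $r$, so the whole $n$-th derivative has the shape $\sum_{m=1}^n K^{(m)}(r)\, r^{m}\, c_{n,m}$ with constants $c_{n,m}$ depending on neither the kernel nor the data.

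Next I would expand the remaining factor $\varepsilon^n = (r'/r)^n(r'/r - 2\cos\gamma)^n$ by the binomial theorem into terms $\binom{n}{p}(-2)^{n-p}\, r'^{\,n+p}\, r^{-(n+p)}\cos^{\,n-p}\gamma$ with $0\le p\le n$; then rewrite each $\cos^q\gamma$ (with $q=n-p\le n$) in the Gegenbauer basis via the identity~\eqref{eq:coschange} of~\citet{averygegen} with $\alpha=(d-2)/2$, obtaining a finite combination of $C_\ell^{(\alpha)}(\cos\gamma)$ with $\ell\le q$, $\ell\equiv q\pmod 2$, and dimension-dependent coefficients; and finally invoke the Gegenbauer addition theorem (the $d$-dimensional analogue of~\eqref{eq:spheradditiontheorem}) to replace each $C_k^{(\alpha)}(\cos\gamma)$ by $c_{k,d}\sum_{h\in\mathcal H_k}Y_k^h(\mathbf r)\,Y_k^h(\mathbf r')^*$. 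After these substitutions every summand is a product of a power of $r'$, a power of $r$, a derivative $K^{(m)}(r)$, a harmonic pair of some degree $k$, and a constant depending only on $d$ and the summation indices.

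The remaining work is bookkeeping: re-index the multiple sum by the harmonic degree $k$ and by $j$, the total power of $r'$ in a summand. The power of $r$ is $m$ (from Faà di Bruno) minus $n+p=j$ (from $\varepsilon^n$), i.e.\ exactly $m-j$, matching~\eqref{eq:kdef}; a degree-$k$ harmonic can only arise from $\cos^q\gamma$ with $q\ge k$, and since $j=n+p\ge n\ge q\ge k$ this forces $j\ge k$, giving the lower limit of the $j$-sum (with the $n=0$ term, $K(r)$ itself, furnishing the degree-zero constant contribution). Gathering every constant — the $1/n!$, the binomial and multinomial coefficients, the powers of $-2$, the $\sqrt{1+\varepsilon}$-derivative constants, the $\cos^q\to$ Gegenbauer coefficients, and the addition-theorem normalizations — into a single symbol $\mathcal T_{jkm}^{(\alpha)}$ produces the stated formula, and this symbol visibly depends only on $d$ and the indices. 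Absolute convergence inside the radius of convergence of~\eqref{eq:taylor} — each $n$ spawning only finitely many terms, all dominated termwise by $|\varepsilon|^n$ with $|\varepsilon|\le(r'/r)(r'/r+2)$ — licenses the rearrangement and shows the radius is unchanged.

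I expect the main obstacle to be organizational rather than analytic: carefully propagating the index ranges and parity constraints through the three substitutions (Faà di Bruno, binomial, Gegenbauer conversion), and in particular verifying that the reordered triple sum genuinely has the nested structure of~\eqref{eq:kdef} with the inner $j$-sum starting at $k$ and the $m$-sum running only up to $j$. The one truly analytic point — justifying the interchange of summation order — is comparatively easy once the series is dominated by the original Taylor series.
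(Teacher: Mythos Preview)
Your proposal is correct and follows essentially the same route as the paper: Taylor expand in $\varepsilon$, open $\varepsilon^n$ by the binomial theorem, convert $\cos^q\gamma$ to Gegenbauer polynomials via the Avery identity, apply the hyperspherical addition theorem, and handle the $\varepsilon$-derivatives by Fa\`a di Bruno; then reindex by $(k,j,m)$ and justify the rearrangement by domination by the absolute Taylor series. Your index accounting ($j=n+p$, $m\le n\le j$, $k\le q\le n\le j$) matches the paper's substitution $j=2n-i$ exactly.

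The only substantive difference is one of explicitness: where you simply note that Fa\`a di Bruno yields $\partial_\varepsilon^n K(r\sqrt{1+\varepsilon})|_0=\sum_{m=1}^n c_{n,m}\,K^{(m)}(r)\,r^m$ with kernel-independent constants $c_{n,m}$, the paper proves a separate lemma giving these Bell-polynomial coefficients in closed form, $\mathcal B_{nm}=(-1)^{n+m}\frac{(2n-2m-1)!!}{2^n}\binom{2n-m-1}{m-1}$, via a fairly lengthy induction. That closed form is what makes $\mathcal T_{jkm}^{(\alpha)}$ concretely computable (and is used downstream for implementation and the error bound), but it is not needed for the theorem as stated, so your argument is complete for the claim at hand. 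Your absolute-convergence sketch, bounding by the Taylor series at $|\varepsilon|\le (r'/r)(r'/r+2)$, is the same idea the paper uses and is in fact the correct bound.
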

(See Section~\ref{sec:derivation} for the proof and the definition of $\mathcal{T}_{jkm}^{(\alpha)}$). We thus arrive at the approximation underlying the Fast Kernel transform, a truncated expansion with truncation parameter $p$.
\begin{equation}\label{eq:finaltrunc}
K(|\textbf{r}'-\textbf{r}|) \approx
\sum_{k=0}^{p}
\sum_{h\in \mathcal{H}_k} 
Y_k^h(\mathbf{r})
 Y_k^h(\mathbf{r}')^* \mathcal{K}^{(k)}_p(r',r)
 \end{equation}
where $\mathcal{K}^{(k)}_p$ is the $p$-term truncation of the infinite sum in the definition of $\mathcal{K}^{(k)}$. This expansion represents the kernel as a sum of products of functions of $\textbf{r}$ with functions of $\textbf{r}'$, which is the form called for by~\eqref{eq:lowrank}. We may use this expansion to generate the $s2m$ and $m2t$ matrices in Algorithm~\ref{alg:fact} by collecting the functions of $\mathbf{r}'$ into the $s2m$ matrix and the functions of $\mathbf{r}$ into the $m2t$ matrix. 
 
 The sums over $j$ and $k$ in the definition of $\mathcal{K}^{(k)}$ turns out to have interesting and helpful properties for our algorithm. In particular, for certain types of kernels it is possible to automatically compute more concise expansions than the form given in~\eqref{eq:kdef}, resulting in better complexity. The details of this additional compression can be found in Section~\ref{sec:rankradial}. 
\section{Analysis}\label{sec:analysis}
\subsection{Truncation Error}
The truncation~\eqref{eq:finaltrunc} yields error
$|\mathcal{E}_P|$, which we bound using Lemma~\ref{lem:error}.
\begin{lemma}[Truncation Error]
\begin{equation}\label{eq:truncerr}
\begin{aligned}
|\mathcal{E}_P|
\leq 
\sum_{k=0}^\infty  
\binom{k+d-3}{k}
\left|\sum_{j=\max{(p+1,k)}}^{\infty}
\sum_{m=1}^{j}
K^{(m)}(r)
r^{m}\left(\frac{r'}{r}\right)^j\mathcal{T}_{jkm}^{(\alpha)}
 \right|
\end{aligned}
\end{equation}
\label{lem:error}
\end{lemma}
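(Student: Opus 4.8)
The plan is to subtract the truncated expansion~\eqref{eq:finaltrunc} from the exact expansion of Theorem~\ref{thm:main}, collect the residual into a single series of angular terms weighted by tails of the radial coefficients $\mathcal{K}^{(k)}$, and then bound the angular part with the (hyper)spherical harmonic addition theorem. By Theorem~\ref{thm:main} the series $K(|\mathbf{r}'-\mathbf{r}|)=\sum_{k=0}^{\infty}\sum_{h\in\mathcal{H}_k}Y_k^h(\mathbf{r})Y_k^h(\mathbf{r}')^*\mathcal{K}^{(k)}(r',r)$ converges absolutely for $\mathbf{r},\mathbf{r}'$ within the radius of convergence, so it may be rearranged termwise. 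Subtracting~\eqref{eq:finaltrunc}, each $k\le p$ contributes $\sum_{h\in\mathcal{H}_k}Y_k^h(\mathbf{r})Y_k^h(\mathbf{r}')^*\bigl(\mathcal{K}^{(k)}(r',r)-\mathcal{K}^{(k)}_p(r',r)\bigr)$, and since $\mathcal{K}^{(k)}_p$ retains precisely the terms of~\eqref{eq:kdef} with $j\le p$, this equals the same expression with the $j$-sum restarted at $j=p+1$; each $k>p$ is omitted from~\eqref{eq:finaltrunc} entirely and contributes the full $\sum_{h\in\mathcal{H}_k}Y_k^h(\mathbf{r})Y_k^h(\mathbf{r}')^*\mathcal{K}^{(k)}(r',r)$, whose $j$-sum starts at $j=k$. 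Because $k>p$ in the latter case, both $j$-sums begin at $\max(p+1,k)$, so, pulling the $h$-independent radial factor out of the $h$-sum,
\[
\mathcal{E}_P=\sum_{k=0}^{\infty}\Bigl(\sum_{h\in\mathcal{H}_k}Y_k^h(\mathbf{r})Y_k^h(\mathbf{r}')^*\Bigr)\sum_{j=\max(p+1,k)}^{\infty}r'^j\sum_{m=1}^{j}K^{(m)}(r)\,r^{m-j}\,\mathcal{T}_{jkm}^{(\alpha)}.
\]

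Next I would apply the triangle inequality over $k$, which separates the angular factor from the radial tail. For the angular factor, the (hyper)spherical harmonic addition theorem in the normalization of the $Y_k^h$ fixed in Theorem~\ref{thm:main} gives $\sum_{h\in\mathcal{H}_k}Y_k^h(\mathbf{r})Y_k^h(\mathbf{r}')^*=C_k^{(\alpha)}(\cos\gamma)$, the degree-$k$ Gegenbauer polynomial with index $\alpha=d/2-1$ evaluated at the cosine of the angle $\gamma$ between $\mathbf{r}$ and $\mathbf{r}'$. Since $|\cos\gamma|\le 1$ and, by the classical extremal property of Gegenbauer polynomials with index $\alpha>0$, $|C_k^{(\alpha)}(t)|\le C_k^{(\alpha)}(1)$ for $t\in[-1,1]$, we obtain $\bigl|C_k^{(\alpha)}(\cos\gamma)\bigr|\le C_k^{(\alpha)}(1)=\binom{k+2\alpha-1}{k}=\binom{k+d-3}{k}$. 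Substituting this into the $k$-sum (leaving the radial tail under the absolute value) and rewriting $r'^jr^{m-j}=r^m(r'/r)^j$ inside it yields exactly~\eqref{eq:truncerr}.

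I expect the main obstacle to be the angular bound: it requires pinning down that the normalization of $Y_k^h$ used throughout makes the $h$-sum equal the Gegenbauer polynomial $C_k^{(\alpha)}(\cos\gamma)$ itself---rather than a $k$-dependent multiple of it---so that the constant $C_k^{(\alpha)}(1)=\binom{k+d-3}{k}$ drops out cleanly, and then invoking the (standard but nontrivial) inequality $|C_k^{(\alpha)}(t)|\le C_k^{(\alpha)}(1)$ on $[-1,1]$. A secondary point requiring care is the index bookkeeping of the first step: verifying that dropping the $k>p$ harmonics and truncating each retained $\mathcal{K}^{(k)}$ at $j=p$ together leave a residual whose $j$-sum starts uniformly at $\max(p+1,k)$, together with the absolute-convergence justification for the termwise rearrangement.
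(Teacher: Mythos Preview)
Your proposal is correct and follows essentially the same approach as the paper: the paper's proof is the two-line remark that the bound follows from $|C_k^{(\alpha)}(\cos\gamma)|\le\binom{k+d-3}{k}$ together with bringing the absolute value inside the $k$-sum, and you have simply spelled out the surrounding bookkeeping (forming the residual, unifying the $j$-index at $\max(p+1,k)$, and invoking the addition theorem to pass from the $h$-sum of harmonics back to the Gegenbauer polynomial). Your flagged concern about the normalization constant $Z_k^{(\alpha)}$ is apt---in the paper's conventions the $h$-sum equals $Z_k^{(\alpha)}{}^{-1}C_k^{(\alpha)}(\cos\gamma)$, not $C_k^{(\alpha)}(\cos\gamma)$, but since $\mathcal{T}_{jkm}^{(\alpha)}=Z_k^{(\alpha)}\overline{\mathcal{T}}_{jkm}^{(\alpha)}$ the factor is already absorbed and the Gegenbauer bound applies exactly as you (and the paper) use it.
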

\begin{proof}
This follows from the bound $|C_k^{(\alpha)}(\cos{\gamma})|\leq\binom{k+d-3}{k}$ on Gegenbauer polynomials \citep{gegen} and bringing the absolute value inside the sum.
\end{proof}

In Figure~\ref{fig:synthetic}, right, we report several empirical findings on this bound. As in the error analysis of the FMM for the electrostatic and Helmholtz kernels, the error is observed to decay exponentially with the choice of truncation parameter.  In practice, the above bound turns out to be fairly loose\textemdash as we report in Section~\ref{sec:experiments}, a choice of $p=4$ yields a residual less than $10^{-4}$ for reasonable distance criteria. Because the bound in Lemma~\ref{lem:error} is observed to be fairly loose (albeit descriptive) in practice, we omit further analysis. It is of interest to further develop and analyze tighter upper bounds.

\subsection{Computational Complexity}\label{sec:complex}
To assess the computational complexity of the FKT, we need to understand the size of our compressed far-field expansion. Our low rank approximation takes the form 
\[
K(|\mathbf{r}_i-\mathbf{r}_j|) \approx \sum_{k=0}^\mathcal{P} U_k(\mathbf{r}_i)V_k(\mathbf{r}_j)=
\sum_{k=0}^{p}
\sum_{h\in \mathcal{H}_k} 
Y_k^h(\mathbf{r})
 Y_k^h(\mathbf{r}')^* \mathcal{K}^{(k)}_p(r',r),
\]
and it is not hard to show (see Section~\ref{sec:numterms}) that 
$\mathcal{P}=\binom{p+d}{d}\sim d^p$.
We note that this is exactly the same as the number of terms in the expansion underlying the Improved FGT (\citep{yang2003improved}), and is achieved for a much broader class of kernels by the FKT. 


\begin{table}
\caption{Commonly used covariance functions in Gaussian process regression}
\begin{center}
\begin{small}
\begin{tabular}{ | c| c |  } 
\hline
 Exponential  & $K(r)=e^{-r}$ \\ \hline
 Mat\'ern $(\nu=3/2)$ & $\sigma^2(1+\frac{\sqrt{3}r}{\rho})e^{-\sqrt{3}r/\rho}$ \\  \hline
 Cauchy  & $\frac{1}{1+r^2/\sigma^2}$ \\  \hline
 Rational Quadratic ($\alpha=1/2$) & $\frac{1}{\sqrt{1+r^2/\sigma^2}}$ \\
 \hline
\end{tabular}
\end{small}
\end{center}
\label{table:cov_funs}
\end{table}

The complexity of Algorithm~\ref{alg:fact} is the sum of the cost of computing the dense matrices for nearby interactions, the cost of computing the $s2m$ matrices for every node, and the cost of computing the $m2t$ matrices \john{for} every node. For simplicity of this analysis, we assume that every leaf has at most $m$ points, each leaf has at most $N_d$ points in its near field, and each point is in the far field of at most $F_d$ nodes. If the total number of points is $N$, the total cost is given by
\begin{equation}\label{eq:firstcomplexity}
\text{FKT}_{cost} = \mathcal{O}\left(\frac{N}{m}mN_d + N\john{\log{(N/m)}}\mathcal{P} + NF_d\mathcal{P}\right)=\mathcal{O}\left(N\left(N_d\john{+\log{(N/m)}d^p}+F_dd^p\right)\right).\end{equation}
In practice, $F_d$ can be made to depend on the intrinsic\footnote{Data which approximately lies on a lower-dimensional manifold has \emph{intrinsic dimension} equal to that of the manifold. The \emph{ambient dimension} is the dimension of the space in which the data is expressed (e.g. a circle has intrinsic dimension 1 and ambient dimension 2). } dimension of the data by the choice of tree decomposition, but is generally exponential in that intrinsic dimension and has an additional factor of $\log{(N/m)}$ coming from the depth of the tree. $N_d$ depends on the maximum leaf capacity $m$ and a factor exponential in the intrinsic dimension. Letting $d_i, d_a$ be the intrinsic and ambient dimensions of the data, we have 
\begin{equation}\label{eq:secondcomplexity}
\text{FKT}_{cost} =\mathcal{O}\left(N\left(mc_n^{d_i}+\john{(1+}c_f^{d_i}\john{)}\log(N/m)d_a^p\right)\right)=
\mathcal{O}\left(N\log(N/d_a^p)\times c_f^{d_i}\times d_a^p\right)
\end{equation}
where we have set $m=\mathcal{O}(d_a^p)$ and $c_n,c_f$ are constants which depend on the problem geometry, typically between 2 and 5.
Note that in cases where the additional compression described at the end of Section~\ref{sec:expansion} is applied, the size $\mathcal{P}$ of the expansion can be reduced by a factor of $d$ and the $d_a^p$ term in \eqref{eq:secondcomplexity} becomes $d_a^{p-1}$.



\begin{figure}
\centering
   \includegraphics[width=0.42\columnwidth]{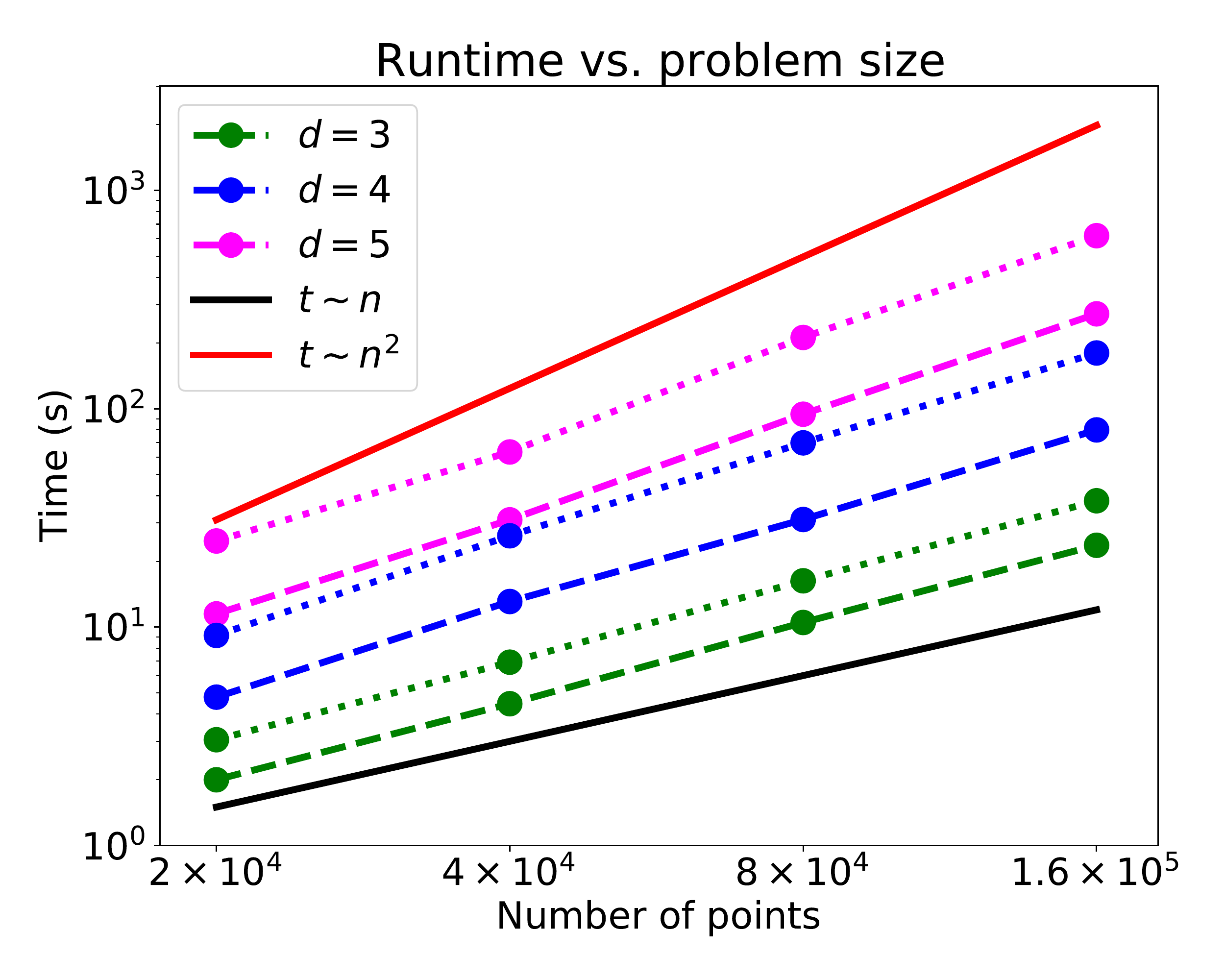}
   \hspace{5mm}
   \includegraphics[width=0.42\columnwidth]
{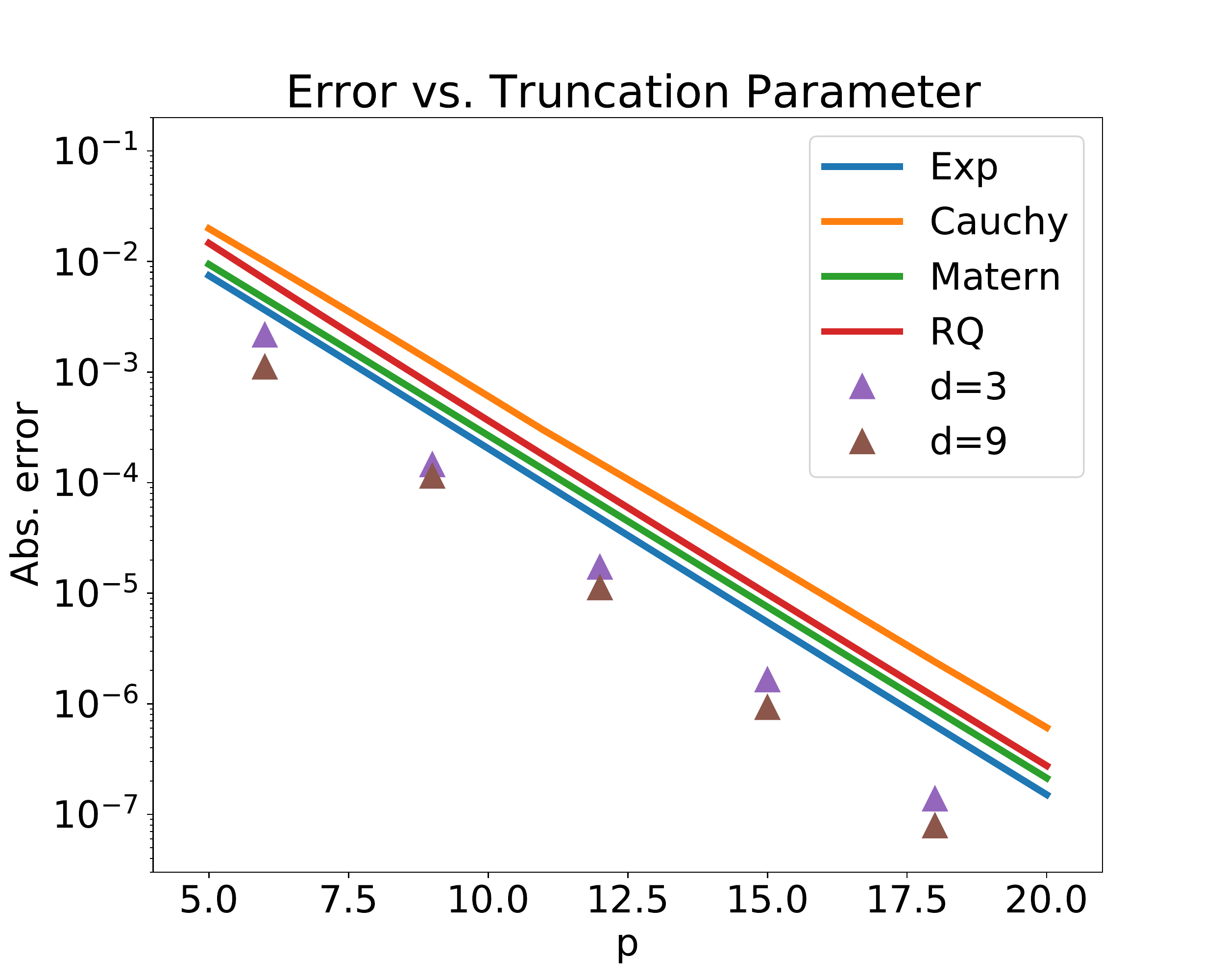}
\caption{Left: Runtimes of the FKT for matrix-vector multiplies for the Mat\'ern kernel with $\nu=1/2$. Dashed lines show results for $p=4$ and dotted lines show $p=6$.
Right: The lines are estimates of the upper bound for $d=3$ in \eqref{eq:truncerr} for various kernels found by fixing $r'/r=1/2$, summing from $p+1$ to $30$, and taking the maximum over 2000 uniformly chosen points in $r\in[0,20]$ (we do not see growth in $r$).
These estimates are shown for the Exponential, Mat\'ern, Cauchy, and Rational Quadratic kernels as described in Table~\ref{table:cov_funs}.
The triangles are experimentally observed errors which are calculated for the $p=4$ Cauchy kernel FKT approximation by taking the maximum absolute error of the truncated expansion for 1000 randomly selected pairs of points $\textbf{r}',\textbf{r}$ satisfying $|\textbf{r}'|=1,|\textbf{r}|=2$. }
\label{fig:synthetic}
\end{figure}

\subsection{Limitations}\label{sec:limitations}
The FKT will generally not scale well to datasets in high dimensions, although its underlying expansions remain accurate. The problem is that the method requires dense computation of points nearby each other, which leads to poor scaling in high dimensions when points tend to be closer together. In contrast, the FGT provide a low-rank approximation for points nearby to each other based on the global low-rankness of the Gaussian kernel. Although the FKT can provide low-rank approximations for distant points, it cannot yet do so for nearby points.  

Although the FKT automatically finds the analytical expansions foundational to the FMM, it scales quasi-linearly rather than linearly as the FMM does. One way to make the FKT a linear algorithm (taking further inspiration from the FMM) would be to develop translation operators for the expansion general to any kernel.

Finally, in contrast to the FGT and the FMM, the FKT lacks particularly helpful theoretical bounds on the error, owing mainly to its dense theoretical underpinning. We present empirical observations in this work, but future developments should provide deeper illumination into the error guarantees that can be given for kernels with known bounds on their derivatives. 

\section{Experiments}
\label{sec:experiments}
We've implemented the FKT in Julia as part of an open source toolkit\footnote{https://github.com/jpryan1/FastKernelTransform.jl}, making use of the NearestNeighbors.jl package \citep{nearestneighbor} to compute near and far sets of points, and the 
TaylorSeries.jl package \citep{benet2019taylor} to automatically compute derivatives. Both packages are licensed under the MIT ``Expat'' License. The synthetic experiments were performed single-threaded on a 2020 Apple Macbook Air with an M1 CPU and 8GB of RAM, and the regression experiment was performed on a 2017 MacBook with a Dual-Core Intel Core i7 and 16GB of RAM.
\subsection{Synthetic Data}



To test the runtime of the algorithm, we generate a synthetic dataset of points uniformly distributed on a unit hypersphere. We then approximate a matrix-vector multiplication with a Mat\'ern kernel matrix (see Table~\ref{table:cov_funs}) on this dataset against a random vector. Our test uses an distance parameter value of $\theta=0.75$, maximum leaf capacity of 512, and includes results for truncation parameter $p=4,6$. Results for this test in a variety of dimensions and problem sizes are shown in Figure~\ref{fig:synthetic}, left\textemdash the runtime is seen to be quasi-linear in the problem size. We observe the FKT to become faster than dense matrix multiplication at $N=1000$ for $d=3$, $N=5000$ for $d=4$, and $N=20,000$ for $d=5$. To test the accuracy of the approximation, we compare the truncated expansion to the true kernel value for the Cauchy kernel in 3 and 9 dimensions. The errors are calculated for the $p$-term approximation for 1000 randomly selected pairs of points $\textbf{r}',\textbf{r}$ satisfying $|\textbf{r}'|=1,|\textbf{r}|=2$, and $p$ is swept from $6$ to $18$ (see Figure~\ref{fig:synthetic}, right). The error is seen to decay exponentially with $p$ in both kernels, and not be affected by dimension. Results for this experiment in more dimensions and for more kernels can be found in Section~\ref{sec:moreerror}.

\subsection{Stochastic Neighborhood Embedding}

\begin{figure}
    \centering
    \hspace*{\fill}
    \includegraphics[width=0.42\columnwidth]{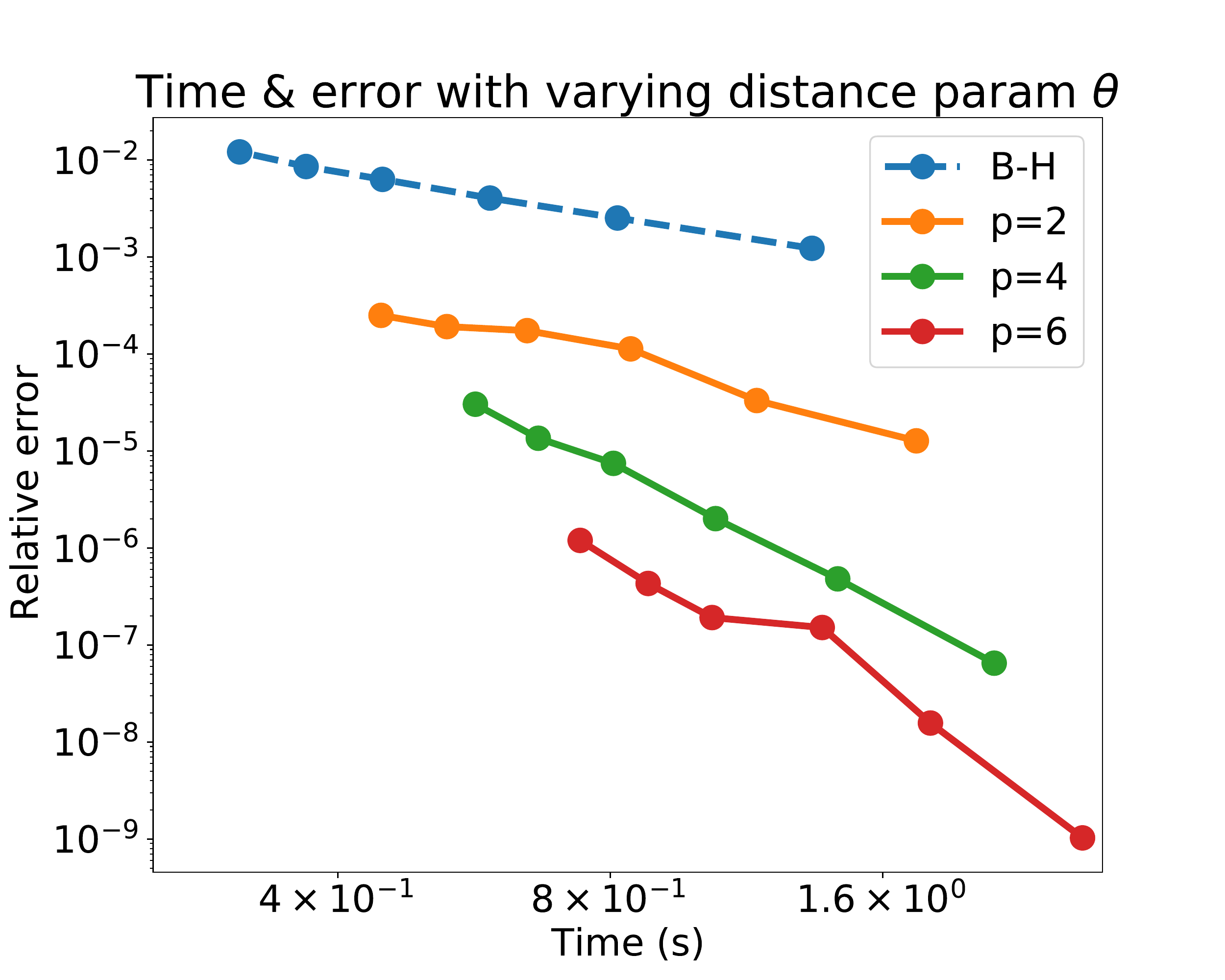}
    \hspace{3mm}
    \includegraphics[width=0.42\columnwidth]{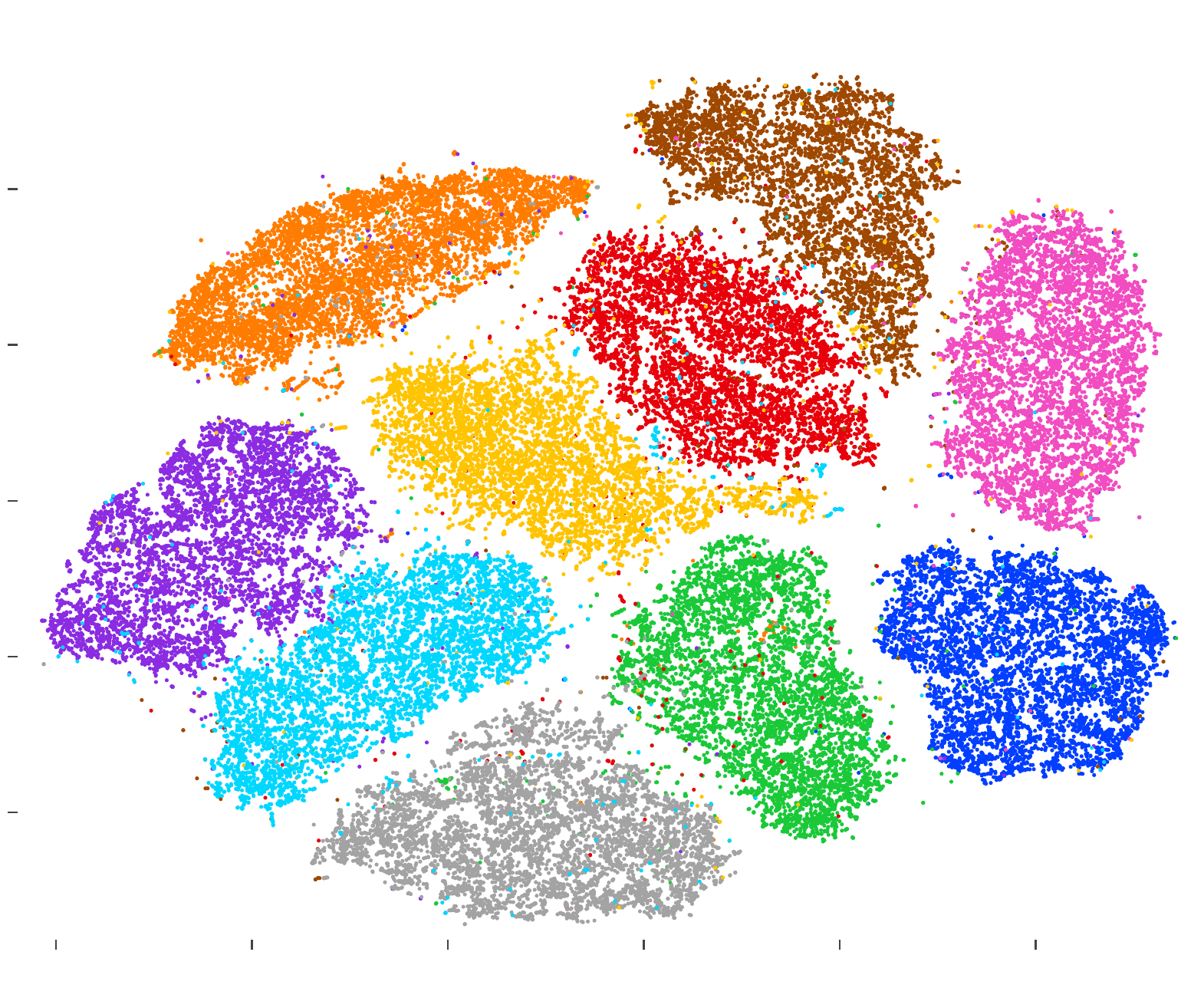}
    \hspace*{\fill}
    \caption{Left: Runtimes and relative errors for a series of matrix vector multiplies using the Cauchy kernel on a 2D dataset of 20k uniformly distributed points in the unit square. B-H refers to the Barnes-Hut method, which is equivalent to the $p=0$ FKT with centers of masses as the expansion centers. The maximum leaf capacity was 512, and for each $p$, we varied the distance parameter $\theta$ between 0.25 and 0.75. 
     Right: A t-SNE embedding of the MNIST training set of 60,000 images of digits computed via application of FKT.}
    \label{fig:tsne}
\end{figure}

The stochastic neighborhood embedding (SNE) was proposed by \cite{hinton2002stochastic},
and \cite{maaten2008visualizing} followed-up that work with the improved t-distributed SNE (t-SNE).
The t-SNE is widely used as an effective tool for dimensionality reduction for data visualization.
An implementation of its optimization routine requires sums of and matrix-vector-products with kernel matrices with $N^2$ entries, which does not scale well to large data sets.
In particular, the relevant gradient of the t-SNE objective contains matrix-vector products with a kernel matrix of the Cauchy kernel $(1+r^2)^{-1}$ with two-dimensional inputs,
which is a prime candidate for the application of FKT.
Previously, \cite{van2014accelerating} proposed accelerated methods for t-SNE based on tree codes including the aforementioned Barnes-Hut scheme. While the Barnes-Hut scheme is simpler,
Fig.~\ref{fig:tsne}, left shows that FKT exhibits a superior accuracy-runtime trade-off if more accuracy is desired.
The plot was generated by varying the $\theta$ distance parameter in a similar vein as \cite{van2014accelerating}.
While very high accuracy might not be of utmost concern for the optimization of a t-SNE, which has a more qualitative goal,
this result more generally demonstrates that FKT achieves a very high degree of accuracy, while also highlighting FKT's generality, since it needs {\it no manual adaption} to work on the relevant matrices
and compute the visualization of MNIST \citep{mnist} in Fig. \ref{fig:tsne}. The MNIST data is licensed under the CC BY-SA 3.0 license.

\subsection{Gaussian Process Regression of Sea Surface Temperature}

\begin{figure}[t!]
\centering 
\includegraphics[width=1\columnwidth]
{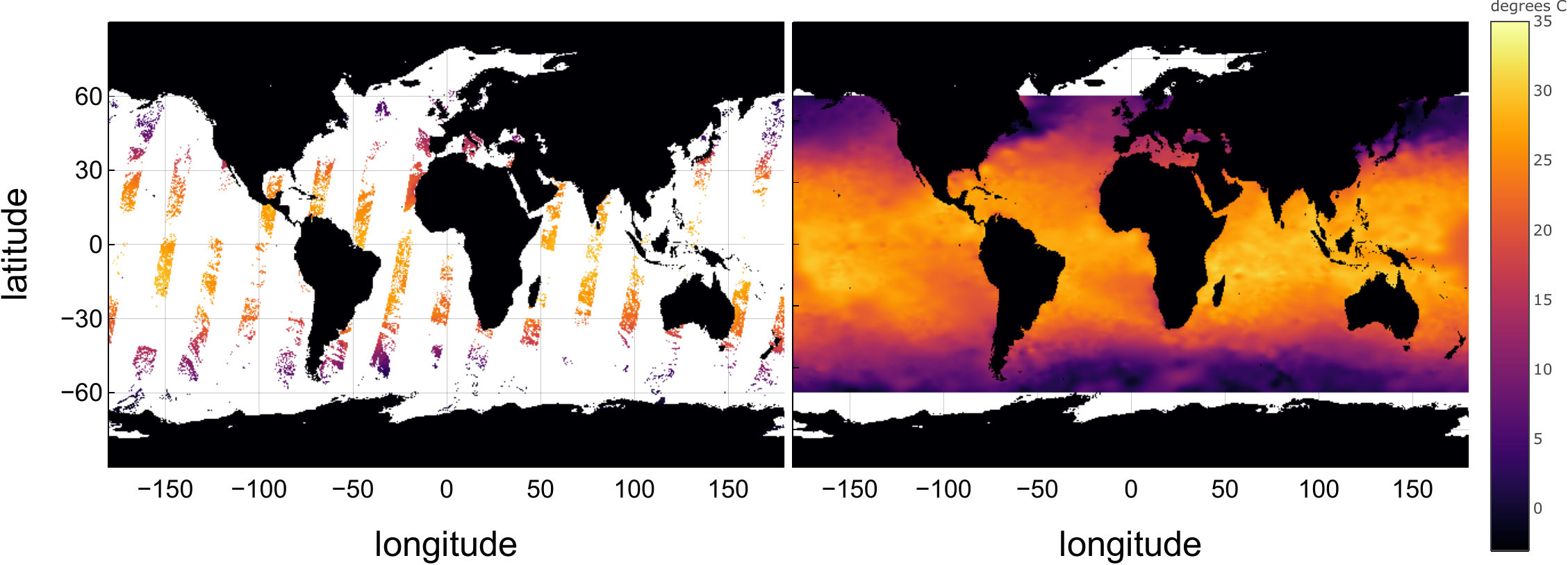}
\caption{
Sea surface temperatures collected by a single satellite throughout one day (left) and
the posterior mean of a Gaussian process with a Mat\'ern kernel conditioned on seven days of data.
}
\vskip -0.2in
\label{fig:SST_measurement}
\end{figure}

Gaussian processes (GPs) constitute another important class of a kernel methods.
Importantly, inference of the posterior predictive mean of a GP can be carried out exclusively through matrix-vector multiplications with kernel matrices and a diagonal ``noise'' variance matrix \citep{wang2019exact}.
To highlight the generality of FKT, we use it here to compute a GP regressor on sea surface temperature data from Copernicus, the European Union's Earth Observation Programme \citep{merchant2019satellite}, which is licensed under the CC-BY 4.0 license.
The data set is collected by a satellite orbiting the earth several times per day, leading to measurement locations with a complex spatial structure (see Fig. \ref{fig:SST_measurement}, left).
Each data point comes with an uncertainty estimate, which we use to populate the diagonal noise variance matrix 
of the model.
We consider data for the first seven days of 2019, for which more than 8 million data points were collected
and sub-sampled it down to a still considerable 145,913 observations
by taking every 56th data point in the temporal order in which they were collected. 
We then evaluated the posterior predictive mean of a GP with the Mat\'ern-3/2 kernel conditioned on the observations and their uncertainties at 480,000 predictive points to arrive at the result on the right of Fig. \ref{fig:SST_measurement}.
We restricted the predictions to be within 60 degrees of latitude of the equator, since the satellite data is very sparse in the polar regions.
The entire computation completed in around twelve minutes on a 2017 MacBook with a Dual-Core Intel Core i7 and 16GB of RAM, highlighting once more the rare combination of generality and high efficiency that FKT achieves.



\section{Discussion and broader impacts}
We've presented the Fast Kernel Transform, a general method for the automatic computation of analytical expansions of isotropic kernels which can be used in hierarchical matrix algorithms on datasets in moderate dimensions. 
The FKT has a high, quantifiable, and controllable level of accuracy, and its cost grows only quasi-linearly in the number of data points and polynomially in the ambient dimension. While our work is entirely algorithmic in nature, it is important to remark that using approximation schemes such as the FKT can introduce additional variation in downstream tasks that are not anticipated. While we provide controllable levels of accuracy, it is still important to assess the level of sensitivity of different applications to such perturbations and validate models developed with these methods across a broad range of criteria.

The method develops a new analytic approximation scheme whose number of terms is equal to those of the expansions developed for the Improved FGT, but for a much broader set of kernels. 
At its core, our method reflects a generalization of the mathematical tools underlying seminal works in kernel methods, such as the FMM and the FGT, and opens up many opportunities for further theoretical study and algorithmic development, such as work on a more rigorous foundation for the class of kernels for which the FKT excels,
and work on removing the ambient dimension from the cost of FKT via an appropriate selection of harmonics to retain when an underlying intrinsically lower-dimensional manifold is known or may be discovered.
Further, the logarithmic term could be removed by the creation of translation operators so as to completely generalize the FMM to this broad class of kernels. These translation operators are the subject of current development by the authors.
We believe that the methods contained herein could prove useful for a wide range of practitioners and researchers of kernel methods, enabling them to apply their methods to much larger problem instances than without acceleration,
and have made an open-source implementation of FKT available.



\bibliography{fkt}
\bibliographystyle{icml2021}

\appendix 
\section{Technical Details}
In this section, we lay out the derivation of the expansion underlying the Fast Kernel transform. Before the derivation, we review the Gegenbauer polynomials which will feature heavily. Additionally, we expand on the opportunity for additional compression for certain types of kernels alluded to in the main text. Finally we will show the details of the computation of the number of terms in the FKT expansion.

\subsection{Gegenbauer Polynomials}
\label{sec:gegenbauer}
The generalized multipole expansion is expressed in terms of Gegenbauer polynomials, also known as ultraspherical polynomials \citep{askey1983generalization}.
For our purposes, these polynomials are best seen as generalizations of the Legendre polynomials which have higher dimensional addition theorems. 
They satisfy the recurrence relation
\begin{equation}
\begin{aligned}
    C^\alpha_0(x) &= 1, \\ 
    C^\alpha_1(x) &= 2\alpha x, \\ 
    C^\alpha_n(x) &= \big[ 2x(n+\alpha-1) C^\alpha_{n-1}(x)- (n+2\alpha-2) C^\alpha_{n-2}(x) \big] / n,
\end{aligned}
\end{equation}
and the hyperspherical harmonic addition theorem \citep{averyproperties}
 \begin{equation}\label{eq:additiontheorem}
 \frac{1}{Z_k^{(\alpha)}}C_k^{(\alpha)}(\cos{\gamma})
 = \sum_{h\in \mathcal{H}_k}
 Y_k^h(\mathbf{r}')
 Y_k^h(\mathbf{r})^*,
 \end{equation}
 where $\mathbf{r},\mathbf{r'}\in\mathbb{R}^d$ have angle $\gamma$ between them,  $\alpha=\frac{d}{2}-1$, $Z_k^{(\alpha)}$ is a normalization term, and  \[
 \mathcal{H}_k\coloneqq \{(\mu_1, \dots \mu_{d-2}) : k\geq\mu_1 \geq \dots \geq |\mu_{d-2}|\geq 0\}.
 \]

\subsection{Derivation of the FKT expansion}\label{sec:derivation}
Before going through the proof of the main theorem of the main text, we will need a lemma concerning the application of Faa di Bruno's theorem to our particular composition of functions ($f(g(\varepsilon))$ where $g(\varepsilon) = r\sqrt{1+\varepsilon}$ and $f(g(\varepsilon))=K(r\sqrt{1+\varepsilon})$.  Before \emph{that} lemma, we prove a combinatorial identity which will be necessary. 
\begin{lemma}\label{lm:jensen}
\[
\sum_{k=0}^n \binom{m+1}{k}=
\sum_{k=0}^n \binom{2k+1}{k}\binom{m-(2k+1)}{n-k}.
\]
\end{lemma}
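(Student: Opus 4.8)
The plan is to prove the identity by generating functions: I will write each side as the coefficient of $x^n$ in an explicit power series and show the two series are equal. For the left-hand side I would use the elementary fact that a partial sum along a binomial row is the convolution of that row with the all-ones sequence,
\[
\sum_{k=0}^n \binom{r}{k} = [x^n]\,\frac{(1+x)^r}{1-x},
\]
(valid for any $r$, with generalized binomial coefficients, since $\tfrac{1}{1-x}=\sum_{j\ge0}x^j$). Taking $r=m+1$ turns the left-hand side into $[x^n]\,(1+x)^{m+1}/(1-x)$.

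For the right-hand side I would do the same: since $\binom{m-(2k+1)}{n-k}=[x^{n-k}](1+x)^{m-2k-1}$ and the terms with $k>n$ vanish, the right-hand side equals
\[
[x^n]\sum_{k\ge0}\binom{2k+1}{k}\,x^k\,(1+x)^{m-2k-1}
= [x^n]\left(\frac{(1+x)^{m+1}}{(1+x)^2}\sum_{k\ge0}\binom{2k+1}{k}\Bigl(\tfrac{x}{(1+x)^2}\Bigr)^{\!k}\right).
\]
So the whole claim reduces to evaluating the inner sum after the substitution $u=x/(1+x)^2$.

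This substitution is the heart of the argument. Pascal's rule together with the symmetry $\binom{2k+1}{k+1}=\binom{2k+1}{k}$ gives $\binom{2k+1}{k}=\tfrac12\binom{2k+2}{k+1}$, so by the central binomial generating function $\sum_{j\ge0}\binom{2j}{j}u^j=(1-4u)^{-1/2}$ we get
\[
\sum_{k\ge0}\binom{2k+1}{k}u^k=\frac{1}{2u}\Bigl(\sum_{j\ge0}\binom{2j}{j}u^j-1\Bigr)=\frac{1}{2u}\left(\frac{1}{\sqrt{1-4u}}-1\right).
\]
The key simplification is that $1-4u=\dfrac{(1-x)^2}{(1+x)^2}$, so (taking the branch with $\sqrt{1-4u}=1$ at $x=0$) $\sqrt{1-4u}=\dfrac{1-x}{1+x}$, whence a one-line computation collapses the inner sum to $(1+x)^2/(1-x)$. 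Substituting back cancels the $(1+x)^2$ and gives $[x^n]\,(1+x)^{m+1}/(1-x)$, which matches the left-hand side and finishes the proof.

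As for difficulty: there is no genuine obstacle, only a bookkeeping point about the regime of validity. All the manipulations should be read as identities of formal power series in $x$ with coefficients polynomial in $m$ (equivalently, analytic identities for $|x|$ small): the substitution $u=x/(1+x)^2$ is legitimate because it has zero constant term, and the quantities $\binom{m-2k-1}{\,\cdot\,}$ and $(1+x)^{m-2k-1}$ are interpreted via generalized binomial coefficients even when $m$ is a small nonnegative integer, after which specializing $m$ is harmless. The only mildly creative step is recognizing the substitution $u=x/(1+x)^2$ that linearizes $\sqrt{1-4u}$; an inductive proof on $n$ is possible but messier, and a bijective proof (the factor $\binom{2k+1}{k}$ counts lattice paths) seems harder to organize, so I would go with the generating-function route.
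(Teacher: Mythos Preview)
Your generating-function argument is correct and complete: the substitution $u=x/(1+x)^2$ indeed gives $\sqrt{1-4u}=(1-x)/(1+x)$, the inner sum collapses to $(1+x)^2/(1-x)$, and both sides become $[x^n]\,(1+x)^{m+1}/(1-x)$. The bookkeeping about generalized binomial coefficients and formal substitution is handled properly.

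This is a genuinely different route from the paper's proof. The paper argues by recurrence: it identifies the left-hand side with an entry of Bernoulli's triangle, invokes a convolution identity of Jensen (1902) to rewrite the right-hand side as $\sum_{k=0}^n \binom{m-k}{n-k}2^k$, and then checks that this rewritten sum satisfies the same Pascal-type recurrence and boundary values. Your approach is self-contained---it does not appeal to Jensen's identity as an external fact---and is arguably cleaner, since the single substitution does all the work at once rather than splitting into base cases and an inductive step. The paper's approach, on the other hand, makes the connection to Bernoulli's triangle explicit and stays entirely within finite-sum manipulations, which some readers may find more transparent than the formal-power-series calculus.
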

\begin{proof}
As a preliminary, note that the LHS are entries in Bernoulli's triangle, and hence satisfy
\[
b_{m,n} = 
\begin{cases}
2^m-1,& m=n \\
b_{m-1,n} + b_{m-1, n-1},& m>n>0\\
1,& n=0.
\end{cases}
\]
It will suffice to show that the RHS follows the same recurrence. 
We refer to the following result from Jenson, 1902:
\[\sum_{k=0}^n \binom{2k+1}{k}\binom{m-(2k+1)}{n-k}  
=
\sum_{k=0}^n
\binom{m-k}{n-k}2^k.
\]
By inspection the $m=n$ and $n=0$ cases are immediately confirmed. If $m>n>0$ then
\[
\sum_{k=0}^n
\binom{m-k-1}{n-k}2^k
 + 
 \sum_{k=0}^{n-1}
\binom{m-k-1}{n-k-1}2^k
\]

\[ = 2^n + \sum_{k=0}^{n-1}
\left(
\binom{m-k-1}{n-k} + \binom{m-k-1}{n-k-1}
\right)2^k\]
\[ = 2^n + \sum_{k=0}^{n-1}
\binom{m-k}{n-k}
2^k = \sum_{k=0}^{n}
\binom{m-k}{n-k}2^k.\]

\end{proof}

\begin{lemma}\label{lm:bell}
When $n>0$, 
\begin{equation}\label{eq:faa}
\frac{\partial^n}{\partial \varepsilon^n}\left(K(r\sqrt{1+\varepsilon})\right)|_{\varepsilon=0} 
= 
\sum_{m=1}^n \mathcal{B}_{nm}K^{(m)}(r)r^m,
\end{equation}
where 
\[\mathcal{B}_{nm}=(-1)^{n+m}\frac{(2n-2m-1)!!}{2^n}\binom{2n-m-1}{m-1},\]
and we will use the notation $K^{(m)}(r)$ to mean $\frac{\partial^m}{\partial r^m}K(r).$
\end{lemma}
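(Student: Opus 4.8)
The plan is to recognize \eqref{eq:faa} as Fa\`a di Bruno's formula applied to the composition $\varepsilon\mapsto K(g(\varepsilon))$ with $g(\varepsilon)=r\sqrt{1+\varepsilon}$, and then to pin down the constants $\mathcal{B}_{nm}$ by a coefficient extraction. In Bell-polynomial form, Fa\`a di Bruno gives $\partial_\varepsilon^n K(g(\varepsilon))|_{\varepsilon=0}=\sum_{m=1}^n K^{(m)}(g(0))\,B_{n,m}(g'(0),g''(0),\dots)$, where $B_{n,m}$ are the partial Bell polynomials. Since $g(\varepsilon)=r\sum_{j\ge 0}\binom{1/2}{j}\varepsilon^j$, one reads off $g(0)=r$ and $g^{(j)}(0)=r\binom{1/2}{j}j!$ for $j\ge 1$; because $B_{n,m}$ is homogeneous of degree $m$ in its arguments, the factor $r^m$ pulls out of every term, so it suffices to prove the kernel-free identity $\mathcal{B}_{nm}=B_{n,m}(c_1,c_2,\dots)$ with $c_j\coloneqq\binom{1/2}{j}j!$.

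First I would pass to exponential generating functions in $n$. The standard identity $\sum_{n\ge m}B_{n,m}(c_\bullet)\frac{t^n}{n!}=\frac{1}{m!}\big(\sum_{j\ge 1}c_j\frac{t^j}{j!}\big)^m$ together with $\sum_{j\ge 0}\binom{1/2}{j}t^j=(1+t)^{1/2}$ collapses the right-hand side to $\frac{1}{m!}\big((1+t)^{1/2}-1\big)^m$. To extract $\mathcal{B}_{nm}=n!\,[t^n]\tfrac{1}{m!}u^m$, where $u\coloneqq(1+t)^{1/2}-1$, I would invert the substitution: $1+t=(1+u)^2$ gives $t=u(u+2)$, i.e.\ $u=t\,\phi(u)$ with $\phi(u)=(u+2)^{-1}$ and $\phi(0)=\tfrac12\ne 0$, so Lagrange inversion applies and yields
\[
\mathcal{B}_{nm}=\frac{n!}{n}\,[u^{n-1}]\frac{u^{m-1}}{(m-1)!}(u+2)^{-n}=\frac{(n-1)!}{(m-1)!}\,[u^{n-m}](u+2)^{-n}.
\]
Expanding $(u+2)^{-n}=2^{-n}\sum_{\ell\ge 0}(-1)^\ell\binom{n+\ell-1}{\ell}2^{-\ell}u^\ell$ and reading off the $u^{n-m}$ coefficient gives
\[
\mathcal{B}_{nm}=(-1)^{n-m}\,2^{\,m-2n}\,\frac{(n-1)!}{(m-1)!}\binom{2n-m-1}{n-m}.
\]

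It then remains to match this against the stated closed form. Using $(-1)^{n-m}=(-1)^{n+m}$, the complementation $\binom{2n-m-1}{n-m}=\binom{2n-m-1}{n-1}$, and rewriting both sides in terms of factorials, the claimed identity $\mathcal{B}_{nm}=(-1)^{n+m}\frac{(2n-2m-1)!!}{2^n}\binom{2n-m-1}{m-1}$ becomes equivalent to $(2\ell)!=2^\ell\,\ell!\,(2\ell-1)!!$ with $\ell=n-m$ — the elementary factorization of a factorial into its even and odd parts — together with the degenerate check at $\ell=0$ (so $m=n$), using the convention $(-1)!!=1$.

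I expect the coefficient extraction to be the crux: one must confirm that $u=(1+t)^{1/2}-1$ is a formal power series vanishing at $t=0$ with nonzero linear coefficient (so Lagrange inversion is legitimate) and that all the manipulations stay within the radius of convergence of \eqref{eq:taylor}; the subsequent factorial bookkeeping is routine. An alternative that avoids generating functions is to first establish, by differentiating the $\varepsilon$-dependent identity $\partial_\varepsilon^n K(r\sqrt{1+\varepsilon})=\sum_m \mathcal{B}_{nm}K^{(m)}(r\sqrt{1+\varepsilon})\,r^m(1+\varepsilon)^{m/2-n}$ once more in $\varepsilon$, the two-term recurrence $\mathcal{B}_{n+1,m}=\tfrac12\big(\mathcal{B}_{n,m-1}+(m-2n)\mathcal{B}_{nm}\big)$; one then verifies by induction on $n$ that the proposed formula satisfies it, the inductive step reducing to Pascal's rule and binomial identities of the type recorded in Lemma~\ref{lm:jensen}.
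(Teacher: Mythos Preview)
Your argument is correct and takes a genuinely different route from the paper. Both start from Fa\`a di Bruno in Bell-polynomial form, but then diverge. The paper evaluates the Bell polynomials by induction via the classical recurrence $B_{n,m}=\sum_{i=1}^{n-m+1}\binom{n-1}{i-1}g^{(i)}(0)B_{n-i,m-1}$; carrying this induction through is where the auxiliary identity of Lemma~\ref{lm:jensen} is needed, so the paper's proof of this lemma is comparatively long and depends on that preparatory result. Your main line instead recognizes the exponential generating function $\sum_n B_{n,m}(c_\bullet)\,t^n/n!=\frac{1}{m!}\big((1+t)^{1/2}-1\big)^m$ and extracts $[t^n]$ by Lagrange inversion with $t=u(u+2)$; the remaining check is the standard factorization $(2\ell)!=2^\ell\,\ell!\,(2\ell-1)!!$. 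This is shorter and entirely bypasses Lemma~\ref{lm:jensen}. Your alternative---deriving the \emph{two-term} recurrence $\mathcal{B}_{n+1,m}=\tfrac12\big(\mathcal{B}_{n,m-1}+(m-2n)\mathcal{B}_{nm}\big)$ by differentiating the $\varepsilon$-dependent identity once---is also simpler than the paper's $n$-term recurrence and, despite your cautious reference to Lemma~\ref{lm:jensen}, in fact needs only Pascal's rule and the double-factorial recursion to verify. The trade-off is that the paper's approach is entirely elementary (no generating functions or Lagrange inversion), while yours assumes familiarity with those tools but rewards it with a much cleaner proof.
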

\begin{proof}
Let $g(\varepsilon)=r\sqrt{1+\varepsilon}$ and note that 
\begin{equation}
g^{(i)}(\varepsilon)_{\varepsilon=0} = (-1)^{i+1}\frac{(2i-3)!!}{2^i}r,
\end{equation}
where we will let $(2i-3)!!=1$ when $i=1$. 
By \citet{Riordan}
\[
\frac{\partial^n}{\partial \varepsilon^n}\left(K(r\sqrt{1+\varepsilon})\right)|_{\varepsilon=0} =
\sum_{m=1}^n K^{(m)}(g(\varepsilon)_{\varepsilon=0}) \cdot B_{n,m}(g'(\varepsilon)_{\varepsilon=0}, g''(\varepsilon)_{\varepsilon=0}, ..., g^{(n-m+1)}(\varepsilon)_{\varepsilon=0})
\]
where $B_{n,m}$ are the Bell polynomials (henceforth we will drop their arguments). Per usual, we set 
\[B_{0,0}=1\quad B_{n,0}=B_{0,m}=0.\]
Then the Bell polynomials satisfy the recurrence relation

\[
B_{n,m} = \sum_{i=1}^{n-m+1}\binom{n-1}{i-1}g^{(i)}(\varepsilon)_{\varepsilon=0}B_{n-i, m-1}.
\]
We will use this to prove 
\[
B_{n,m} = (-1)^{n+m}\frac{(2n-2m-1)!!}{2^n}\binom{2n-m-1}{m-1}r^m
\]\[ = 
(-1)^{n+m}r^m\frac{(2n-m-1)!}{(m-1)!(n-m)!2^{2n-m}}\quad n\geq m > 0,
\]
by induction. We begin with the base cases of $n=m=1$ and $n>m=1$. For the former, the recurrence relation yields
\[B_{1,1} = g'(\varepsilon)_{\varepsilon=0} = \frac{1}{2}r,\]
and our claim yields
\[B_{1,1} = (-1)^2r^1\frac{0!}{2^{1}0!0!} = \frac{1}{2}r.\]
When $n>m=1$ the recurrence relation gives

\[B_{n,1} = \sum_{i=1}^n\binom{n}{0}
g^{(i)}(\varepsilon)_{\varepsilon=0}B_{n-i,0} = g^{(n)}(\varepsilon)_{\varepsilon=0} = (-1)^{(n+1)}r\frac{(2n-3)!!}{2^n},\]
and our claim yields
\[
B_{n,1} = (-1)^{n+1}r^1\frac{(2n-2)!}{2^{(2n-1)}0!(n-1)!}
=
(-1)^{(n+1)}r\frac{(2n-3)!(2n-2)}{2^{(2n-2)}(n-2)!(2n-2)}
\]
\[
= 
(-1)^{(n+1)}r\frac{(2n-3)!}{2^{(2n-2)}(n-2)!} = 
(-1)^{(n+1)}r\frac{(2n-3)!!}{2^{n}}
\]
For the inductive step, we need to show that 
\[
B_{n,m} = \sum_{i=1}^{n-m+1}
\binom{n-1}{i-1}(-1)^{i+1}\frac{(2i-3)!!}{2^i}r \frac{(-1)^{n-i+m-1}r^{m-1}(2n-2i-m)!}{(m-2)!(n-i+m-1)!2^{2n-2i-m+1}}
\]
\[
=
r^m\frac{(-1)^{n+m}}{2^{2n-m+1}(m-2)!}\sum_{i=1}^{n-m+1}\binom{n-1}{i-1}2^i\frac{(2i-3)!!(2n-2i-m)!}{(n-i-m+1)!}.
\]
Separating the $i=1$ term out so that the double factorial is of positive integers
\[
 = 
 r^m\frac{(-1)^{n+m}}{2^{2n-m+1}(m-2)!}\left(
 \frac{2(2n-m-2)!}{(n-m)!} + \sum_{i=2}^{n-m+1}\binom{n-1}{i-1}2^i\frac{(2i-3)!!(2n-2i-m)!}{(n-i-m+1)!}\right).
\]

Moving some terms out and rewriting the double factorial
\[
 = 
 r^m\frac{(-1)^{n+m}(2n-m-1)!}{2^{2n-m}(m-1)!(n-m)!}\]\[\cdot \left(
\frac{m-1}{2n-m-1} + \frac{m-1}{(2n-m-1)!}
\sum_{i=2}^{n-m+1}
\binom{n-1}{i-1}2^{i-1}\frac{(2i-3)!(2n-2i-m)!(n-m)!}{2^{i-2}(i-2)!(n-i-m+1)!}\right).
\]
Evidently we are done if the large parenthetical is equal to 1, which is equivalent to 
\[
\frac{2n-m-1-(m-1)}{2n-m-1} = \frac{m-1}{(2n-m-1)!}
\sum_{i=2}^{n-m+1}
\binom{n-1}{i-1}2\frac{(2i-3)!(2n-2i-m)!(n-m)!}{(i-2)!(n-i-m+1)!}
\]
\[
(2n-m-2)! = (m-1) 
\sum_{i=2}^{n-m+1}
\binom{n-1}{i-1}\frac{(2i-3)!(2n-2i-m)!(n-m-1)!}{(i-2)!(n-i-m+1)!}.
\]
Starting the sum at $i=1$,
\[
(2n-m-2)! =  (m-1)
\sum_{i=1}^{n-m}
\binom{n-1}{i}\frac{(2i-1)!(2n-2i-m-2)!(n-m-1)!}{(i-1)!(n-i-m)!}.
\]
Now we break the binomial coefficient into factorials and rearrange into new binomial coefficients
\[
(2n-m-2)! =  (m-1)
\sum_{i=1}^{n-m}
\frac{(n-1)!(2i-1)!(2n-2i-m-2)!(n-m-1)!}{(n-1-i)!(i)!(i-1)!(n-i-m)!}
\]

\[
(2n-m-2)! =  (m-1)
\sum_{i=1}^{n-m}
\binom{2i-1}{i} 
\binom{2n-2i-m-1}{n-i-1}
\frac{(n-1)!(n-m-1)!}{(2n-2i-m-1)}.
\]
Moving $m-1$ into the sum and some factorials to the LHS
\[
\binom{2n-m-2}{n-m-1} =
\sum_{i=1}^{n-m}
\binom{2i-1}{i} 
\binom{2n-2i-m-1}{n-i-1}
\frac{m-1}{2n-2i-m-1}
\]

\[
= \sum_{i=1}^{n-m}
\binom{2i-1}{i} 
\binom{2n-2i-m-1}{n-i-1}
\left(1 - \frac{2(n-i-m)}{2n-2i-m-1}\right)
\]
\[
= \sum_{i=1}^{n-m}
\binom{2i-1}{i} 
\binom{2n-2i-m-1}{n-i-1}
- 2
\sum_{i=1}^{n-m-1}
\binom{2i-1}{i} 
\binom{2n-2i-m-2}{n-i-2}.
\]
Note that the second sum goes to $n-m-1$ since the $i=n-m$ term gave zero. We set the sum variable to start at zero
\[
= \sum_{i=0}^{n-m-1}
\binom{2i+1}{i} 
\binom{2n-m-2-(2i+1)}{n-i}
- 2
\sum_{i=0}^{n-m-2}
\binom{2i+1}{i} 
\binom{2n-m-(2i+1)-1}{n-i-1}.
\]
Applying Lemma \ref{lm:jensen} to both sums yields
\[
\binom{2n-m-2}{n-m-1} =
\sum_{i=0}^{n-m-1}\binom{2n-m-1}{i} - 2\sum_{i=0}^{n-m-2}\binom{2n-m-2}{i}. 
\]
Applying Pascal's identity to the first sum and then combining the two into a telescoping sum yields the desired result.
\end{proof}
We now move to the derivation of the FKT's expansion.
 In short, the derivation proceeds by Taylor expanding in a variable which is small for well-separated points, rearranging into a Gegenbauer expansion, and replacing the derivative term with the simpler form via the above lemma. 
\begin{theorem}
If $K$ is analytic except possibly the origin, then for $\mathbf{r}'$,$\mathbf{r}$ within the radius of convergence, 
\[
K(|\mathbf{r'}-\mathbf{r}|)
=
\sum_{k=0}^\infty
\sum_{h\in\mathcal{H}_k}
Y_k^h(\mathbf{r})Y_k^h(\mathbf{r}')^* \mathcal{K}^{(k)}(r',r),\] 
where 
\[
\mathcal{K}^{(k)}(r',r)\coloneqq
 \sum_{j=k}^{\infty}
r'^j
\sum_{m=1}^{j}
K^{(m)}(r)
r^{m-j}\mathcal{T}_{jkm}^{(\alpha)},
\]
and $\mathcal{T}_{jkm}^{(\alpha)}$ are constants which depend only on the dimension and not on the kernel or data. The radius of convergence is the same as that of~\ref{eq:taylor}.
\end{theorem}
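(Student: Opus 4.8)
The plan is to carry out the three steps flagged just before the statement: Taylor-expand $K(r\sqrt{1+\varepsilon})$ in $\varepsilon$, use Lemma~\ref{lm:bell} to replace each $\varepsilon$-derivative by a finite combination of radial derivatives $K^{(m)}(r)$, and then convert the resulting powers of $\cos\gamma$ into (hyper)spherical harmonics. Starting from~\eqref{eq:taylor} and applying Lemma~\ref{lm:bell} to the terms with $n\geq 1$ gives
\[
K(|\mathbf{r}'-\mathbf{r}|) = K(r) + \sum_{n=1}^{\infty}\frac{\varepsilon^n}{n!}\sum_{m=1}^{n}\mathcal{B}_{nm}\,K^{(m)}(r)\,r^m,
\]
where the constant $n=0$ term $K(r)$ is tracked separately and folded into the eventual $k=0$ summand.

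Next I would substitute $\varepsilon^n=(r'/r)^n(r'/r-2\cos\gamma)^n$ and expand the second factor by the binomial theorem, producing $\sum_{i=0}^{n}\binom{n}{i}(-2)^{n-i}(r'/r)^{n+i}\cos^{n-i}\gamma$. The structural heart of the argument is then to rewrite each power $\cos^{\ell}\gamma$ (with $\ell=n-i$) as the \emph{finite} linear combination $\sum_k c^{(\alpha)}_{\ell k}\,C_k^{(\alpha)}(\cos\gamma)$ supplied by the cosine-to-Gegenbauer identity of~\citet{averygegen} (given as~\eqref{eq:coschange}), and to split each Gegenbauer polynomial through $C_k^{(\alpha)}(\cos\gamma)=Z_k^{(\alpha)}\sum_{h\in\mathcal{H}_k}Y_k^h(\mathbf{r})\,Y_k^h(\mathbf{r}')^*$ from~\eqref{eq:additiontheorem}. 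After this, every summand is a product of $Y_k^h(\mathbf{r})\,Y_k^h(\mathbf{r}')^*$, a power $(r'/r)^{n+i}$, a factor $r^m$, a derivative $K^{(m)}(r)$, and a scalar built only from $n,i,k,m$ and $\alpha=d/2-1$.

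I would then re-index with $j\coloneqq n+i$, so that the radial dependence $(r'/r)^{j}r^m$ becomes $r'^{j}r^{m-j}$; collecting all contributions of fixed harmonic degree $k$ yields exactly
\[
\mathcal{K}^{(k)}(r',r)=\sum_{j\geq k}r'^j\sum_{m=1}^{j}K^{(m)}(r)\,r^{m-j}\,\mathcal{T}_{jkm}^{(\alpha)},
\]
where $\mathcal{T}_{jkm}^{(\alpha)}=Z_k^{(\alpha)}\sum_{n}\tfrac{(-2)^{2n-j}}{n!}\binom{n}{j-n}\mathcal{B}_{nm}\,c^{(\alpha)}_{2n-j,\,k}$ with the inner sum over the finitely many admissible $n$. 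The ranges $j\geq k$ and $m\leq j$, and the parity condition inside the $c^{(\alpha)}$ coefficients, are automatic, since a term of degree $k$ requires $k\leq n-i\leq n+i=j$ and $m\leq n\leq j$, with $\mathcal{T}_{jkm}^{(\alpha)}$ vanishing for the remaining index combinations; crucially $\mathcal{T}_{jkm}^{(\alpha)}$ involves only $\alpha$ (hence $d$) and the indices, never $K$ or the data.

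Finally I would justify interchanging the order of summation. Each $\varepsilon^n$ expands into only finitely many pieces, and bounding $|\cos\gamma|\leq 1$, $|C_k^{(\alpha)}(\cos\gamma)|\leq\binom{k+d-3}{k}$, and the conversion coefficients by dimension-only constants, the absolute values of all pieces coming from the $n$-th Taylor term sum to at most $\tfrac{1}{n!}\bigl|\partial_\varepsilon^n K(r\sqrt{1+\varepsilon})|_{\varepsilon=0}\bigr|\,(r'/r)^n(r'/r+2)^n$. Since $(r'/r)(r'/r+2)$ is the largest value of $|\varepsilon|$ as $\gamma$ ranges over $[0,\pi]$, this is summable precisely when $\mathbf{r}',\mathbf{r}$ lie within the radius of convergence of~\eqref{eq:taylor}, so Fubini licenses the regrouping and the radius of convergence is unchanged. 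I expect the main obstacle to be pure bookkeeping: collapsing the four nested indices $n,i,k,m$ to the two of the statement ($j,k,m$) and checking that the collected scalars really are the claimed $\mathcal{T}_{jkm}^{(\alpha)}$ — the genuinely delicate combinatorics (the closed form for the Bell-polynomial coefficients $\mathcal{B}_{nm}$) having already been dispatched in Lemmas~\ref{lm:jensen} and~\ref{lm:bell}.
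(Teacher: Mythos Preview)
Your proposal is correct and follows essentially the same route as the paper: Taylor-expand in $\varepsilon$, binomially expand $\varepsilon^n$, convert the powers $\cos^\ell\gamma$ to Gegenbauer polynomials via~\eqref{eq:coschange}, invoke the addition theorem~\eqref{eq:additiontheorem}, apply Lemma~\ref{lm:bell} for the $\varepsilon$-derivatives, and justify the reordering by absolute convergence of~\eqref{eq:taylor}. The only cosmetic differences are that you invoke Lemma~\ref{lm:bell} before rather than after the Gegenbauer step, and your binomial index $i$ corresponds to the paper's $n-i$ (so your $j=n+i$ matches the paper's $j=2n-i$); the resulting $\mathcal{T}_{jkm}^{(\alpha)}$ coincide.
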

\begin{proof}
\[K(|\textbf{r}'-\textbf{r}|)=K(r\sqrt{1+\varepsilon}).\]
Taylor expanding around $\varepsilon=0$,
\[=\sum_{n=0}^\infty \varepsilon^n \frac{1}{n!} \frac{\partial^n}{\partial \varepsilon^n} \left(K(r\sqrt{1+\varepsilon})\right)_{\varepsilon=0}. \]
Noting $\varepsilon = \left(\frac{r'^2}{r^2}-2\frac{r'}{r}\cos{\gamma}\right)$ and expanding the binomial

\begin{equation}\label{eq:prelemma}
=\sum_{n=0}^\infty
\sum_{i=0}^n
\frac{r'^{2(n-i)}}{r^{2(n-i)}}
\left(-2\frac{r'}{r}\cos{\gamma}\right)^i
\binom{n}{i}
\frac{1}{n!}
\frac{\partial^n}{\partial \varepsilon^n} \left(K(r\sqrt{1+\varepsilon})\right)_{\varepsilon=0}.
\end{equation}

We will make use of \citet{averygegen}
\begin{equation}\label{eq:coschange}
\cos^i{\gamma} = \sum_{k=0}^i\mathcal{A}_{ki}C_k^{\alpha}(\cos{\gamma}),
\end{equation}
where $\alpha=d/2-1$, $C_k^{\alpha}(\cos{\gamma})$ is the Gegenbauer polynomial, $\mathcal{A}_{ki}=0$ when $k\neq i \mod 2$, and
\begin{equation}
\mathcal{A}_{ki} = \frac{i!(\alpha+k)}{2^i
\frac{i-k}{2}
(\alpha)_{ \frac{i+k}{2} + 1 }}
\end{equation}
when $k= i \mod 2$. Here $(\alpha)_{ \frac{i+k}{2} + 1 }$ denotes the rising factorial, i.e. $(\alpha)_n = (\alpha)(\alpha+1)\dots(\alpha+n-1)$. Then, substituting in for the powers of cosine in \eqref{eq:prelemma} yields
\[=\sum_{n=0}^\infty \sum_{i=0}^n \sum_{k=0}^i
\mathcal{A}_{ki}
C_k^{(\alpha)}(\cos{\gamma})
\frac{r'^{2(n-i)}}{r^{2(n-i)}}
\left(-2\frac{r'}{r}\right)^i
\binom{n}{i}
\frac{1}{n!}
\frac{\partial^n}{\partial \varepsilon^n} \left(K(r\sqrt{1+\varepsilon})\right)_{\varepsilon=0}.
\]
We pause to show that this triple sum is absolutely convergent. 
Let $\varepsilon_{\gamma=0}=\frac{r'^2}{r^2}$ be the value of $\varepsilon$ with $\gamma$ set to 0, and assume that this value is inside the radius of convergence of the above Taylor series in $\varepsilon$. Then
\[
\sum_{n=0}^\infty \sum_{i=0}^n \sum_{k=0}^i
\left|
\mathcal{A}_{ki}
C_k^{(\alpha)}(\cos{\gamma})
\frac{r'^{2(n-i)}}{r^{2(n-i)}}
\left(-2\frac{r'}{r}\right)^i
\binom{n}{i}
\frac{1}{n!}
\frac{\partial^n}{\partial \varepsilon^n} \left(K(r\sqrt{1+\varepsilon})\right)_{\varepsilon=0}\right|
\]
\[\leq \sum_{n=0}^\infty \sum_{i=0}^n
\left|
\frac{r'^{2(n-i)}}{r^{2(n-i)}}
\left(-2\frac{r'}{r}\right)^i
\binom{n}{i}
\frac{1}{n!}
\frac{\partial^n}{\partial \varepsilon^n} \left(K(r\sqrt{1+\varepsilon})\right)_{\varepsilon=0}\right|
 \sum_{k=0}^i
 \left|\mathcal{A}_{ki}
C_k^{(\alpha)}(1)
\right|
\]
\[= \sum_{n=0}^\infty\sum_{i=0}^n
\left|
\frac{r'^{2(n-i)}}{r^{2(n-i)}}
\left(-2\frac{r'}{r}\right)^i
\binom{n}{i}
\frac{1}{n!}
\frac{\partial^n}{\partial \varepsilon^n} \left(K(r\sqrt{1+\varepsilon})\right)_{\varepsilon=0}\right|
\]
\[=\sum_{n=0}^\infty
\left|\frac{\partial^n}{\partial \varepsilon^n} \left(K(r\sqrt{1+\varepsilon})\right)_{\varepsilon=0}
\frac{1}{n!}
\right|
\sum_{i=0}^n
\left|
\frac{r'^{2(n-i)}}{r^{2(n-i)}}
\left(-2\frac{r'}{r}\right)^i
\binom{n}{i}
\right|
\]\[=
\sum_{n=0}^\infty
\left|\frac{\partial^n}{\partial \varepsilon^n} \left(K(r\sqrt{1+\varepsilon})\right)_{\varepsilon=0}
\varepsilon_{(\gamma=0)}^n
\frac{1}{n!}
\right|.
\]
Since $|\varepsilon_{\gamma=0}|$ is inside the radius of convergence of the Taylor series, then the final sum above is finite as a consequence of the Taylor series being absolutely convergent in its radius of convergence. 

This absolute convergence allows us to swap the sums as we please, which we will do. First, let $j=2n-i$ so that $i=2n-j$, then
\[
\sum_{n=0}^\infty \sum_{i=0}^n \sum_{k=0}^i
=
\sum_{n=0}^\infty \sum_{j=n}^{2n} \sum_{k=0}^{2n-j}
=
 \sum_{j=0}^{\infty}
 \sum_{n=\lceil j/2\rceil}^j 
 \sum_{k=0}^{2n-j}
 =
 \sum_{j=0}^{\infty}
 \sum_{k=0}^j
 \sum_{n=\frac{j+k}{2}}^j
 =
 \sum_{k=0}^\infty
 \sum_{j=k}^\infty
 \sum_{n=\frac{j+k}{2}}^j.
\]
So our current form of the expansion is 
\[
\sum_{k=0}^\infty
 \sum_{j=k}^\infty
 \sum_{n=\frac{j+k}{2}}^j
\mathcal{A}_{k,(2n-j)}
C_k^{(\alpha)}(\cos{\gamma})
\frac{r'^{2(n-(2n-j))}}{r^{2(n-(2n-j))}}
\left(-2\frac{r'}{r}\right)^{(2n-j)}
\frac{1}{n!}
\frac{\partial^n}{\partial \varepsilon^n} \left(K(r\sqrt{1+\varepsilon})\right)_{\varepsilon=0}
\]
\begin{equation}\label{eq:prefaa}
=
\sum_{k=0}^\infty
C_k^{(\alpha)}(\cos{\gamma})
 \sum_{j=k}^\infty
 \sum_{n=\frac{j+k}{2}}^j
\mathcal{A}_{k,(2n-j)}
\frac{r'^{2(n-(2n-j))}}{r^{2(n-(2n-j))}}
\left(-2\frac{r'}{r}\right)^{(2n-j)}
\frac{1}{n!}
\frac{\partial^n}{\partial \varepsilon^n} \left(K(r\sqrt{1+\varepsilon})\right)_{\varepsilon=0}.
\end{equation}

The pieces are now in place for us to arrive at the FKT's final form. Plugging~\eqref{eq:faa} into~\eqref{eq:prefaa} yields
\[
\sum_{k=0}^\infty
C_k^{(\alpha)}(\cos{\gamma})
 \sum_{j=k}^\infty
 \sum_{n=\frac{j+k}{2}}^j
 \sum_{m=1}^n
\mathcal{A}_{k,(2n-j)}
\frac{r'^{2(n-(2n-j))}}{r^{2(n-(2n-j))}}
\left(-2\frac{r'}{r}\right)^{(2n-j)}
\frac{1}{n!}
\mathcal{B}_{nm}K^{(m)}(r)r^m
\]

\[
=\sum_{k=0}^\infty
C_k^{(\alpha)}(\cos{\gamma})
 \sum_{j=k}^\infty
 r'^j
 \sum_{m=1}^j
 K^{(m)}(r)r^{m-j}
\overline{\mathcal{T}}_{k,j,m}^{(\alpha)},
\]
where 
\[
\overline{\mathcal{T}}_{k,j,m}^{(\alpha)}\coloneqq
 \sum_{n=\max{(\frac{j+k}{2},m)}}^j
\mathcal{A}_{k,(2n-j)}
(-2)^{(2n-j)}
\frac{1}{n!}
\mathcal{B}_{nm}.
\]

Finally, expanding the Gegenbauer polynomial into hyperspherical harmonics yields
\[
=\sum_{k=0}^\infty
\sum_{h\in\mathcal{H}_k}
Y_k^h(\textbf{r})Y_k^h(\textbf{r}')^*
 \sum_{j=k}^\infty
 r'^j
 \sum_{m=1}^j
 K^{(m)}(r)r^{m-j}
\mathcal{T}_{k,j,m}^{(\alpha)},
\]
\end{proof}
where $\mathcal{T}_{k,j,m}^{(\alpha)}
=Z_k^{(\alpha)}\overline{\mathcal{T}}_{k,j,m}^{(\alpha)}$.

\subsection{Number of terms in FKT}\label{sec:numterms}
The ``rank'' of the low-rank expansion is given by
\begin{equation}\label{eq:num_multipoles}
\sum_{k=0}^P |\mathcal{H}_k|\lfloor \frac{P-k}{2}+1\rfloor,
\end{equation}
where $|\mathcal{H}_k|$ is the number of linearly independent hyperspherical harmonics of order $k$, and $\lfloor \frac{P-k}{2}+1\rfloor$ is the rank of $\mathcal{K}^{(k)}_p$. The former is given by 
$|\mathcal{H}_k| = \binom{k+d-1}{k}-\binom{k+d-3}{k-2}$ in \citet{averyproperties}.
We start by writing 
$\lfloor \frac{P-k}{2}+1\rfloor = \frac{P-k+1}{2}+\frac{1}{2}(1_{k=P\mod 2})$ and addressing the first term first. 
\[\sum_{k=0}^P
\left(\binom{k+d-1}{k}-\binom{k+d-3}{k-2}\right)
\frac{P-k+1}{2}
\]

\[=\sum_{k=0}^P
\binom{k+d-1}{k}\frac{P-k+1}{2}
-
\sum_{k=2}^P
\binom{k+d-3}{k-2}\frac{P-k+1}{2}.
\]
Further breaking apart the sum, 
\[=
\frac{1}{2}
P
\sum_{k=0}^P
\binom{k+d-1}{k}
-\frac{1}{2}
\sum_{k=0}^P
\binom{k+d-1}{k}(k-1)
\]\[
-\frac{1}{2}
P
\sum_{k=0}^{P-2}
\binom{k+d-1}{k}
+\frac{1}{2}
\sum_{k=0}^{P-2}
\binom{k+d-1}{k}(k+1).
\]
Applying the hockey stick identity yields
\[=
\frac{1}{2}
P
\binom{d+P}{P}
-\frac{1}{2}
\sum_{k=0}^P
\binom{k+d-1}{k}(k-1)
\]\[
-\frac{1}{2}
P
\binom{d+P-2}{P-2}
+\frac{1}{2}
\sum_{k=0}^{P-2}
\binom{k+d-1}{k}(k+1).
\]
Combining the two remaining sums
\[=
\frac{1}{2}
P
\binom{d+P}{P}
-\frac{1}{2}P\binom{d+P-2}{P-2}
\]\[
-\frac{1}{2}(P-2)\binom{d+P-2}{P-1}
-\frac{1}{2}(P-1)\binom{d+P-1}{P} + \sum_{k=0}^{P-2}\binom{k+d-1}{k}.
\]
Again making use of the hockey stick identity,
\[=
\frac{1}{2}
P
\binom{d+P}{P}
-\frac{1}{2}P\binom{d+P-1}{P-1}
\]\[+\binom{d+P-2}{P-1}
-\frac{1}{2}(P-1)\binom{d+P-1}{P}
+ \binom{d+P-2}{P-2}
\]

\[=
\binom{d+P-2}{P-1}
+\frac{1}{2}\binom{d+P-1}{P}
 + \binom{d+P-2}{P-2}
\]

\[=\binom{d+P-1}{P-1} + \frac{1}{2} \binom{d+P-1}{P}.
\]
Where we have used Pascal's identity several times. 

Then we address the second component, 
\[\frac{1}{2}\sum_{\substack{k=0\\ k=P\mod{2}}}^P
\left(\binom{k+d-1}{k}-\binom{k+d-3}{k-2}\right).
\]
This telescopes to 
\[=\frac{1}{2} \binom{d+P-1}{P}.
\]
Then, summing both components up and applying Pascal's identity yields
\[\binom{d+P-1}{P-1} + \frac{1}{2} \binom{d+P-1}{P}
+\frac{1}{2} \binom{d+P-1}{P}
=\binom{d+P}{P}.
\]

\subsection{Compression of the Radial Expansion}\label{sec:rankradial}

\begin{table}
\begin{center}
\begin{small}
\begin{tabular}{ c c c c c c c c c } 
 \toprule
 $d$ & 3 & 4 & 5 & 6 & 7 & 8 & 9\\
 \midrule
  $\frac{1}{r}$ & 1 & - & 2 & - & 3 & - & 4\\ 
  $\frac{1}{r^2}$ & - & 1 & - & 2 & - & 3 & - \\  
  $\frac{1}{r^3}$ & - & - & 1 & - & 2 & - & 3\\  
  $\frac{1}{r}e^{-r}$ & 1 & - & 2 & - & 3 & - & 4\\   
  $e^{-r}$ & 2 & - & 3 & - & 4 & - & 5\\  
  $re^{-r}$ & 3 & - & 4 & - & 5 & - & 6\\  
  $e^{-1/r}$ & 4 & 4 & 4 & 4 & 4 & 4 & 4\\
  $e^{-1/r^2}$ & 2 & 2  & 2 & 2 & 2 & 2 & 2\\
  \bottomrule
\end{tabular}
\end{small}
\end{center}
\caption{For a variety of different kernels in different dimensions, the value of $\mathcal{R}_k$ achievable in \eqref{eq:f_g}, independent of $P$. Dashes indicate that $\mathcal{R}_k$ was always found to be equal to its upper bound of $\lfloor \frac{P+k-2}{2}\rfloor$. By automatically finding these shorter expressions for the radial expansions when possible, we are able to change the $\lfloor \frac{P-k+2}{2}\rfloor$ term in \eqref{eq:num_multipoles} to a constant. The 2-term radial expansion for $e^{-r}$ is given in Table~\ref{table:exp_decomp}.}
\label{table:ranks}
\end{table}

\begin{table}
\begin{center}
\begin{small}
\begin{tabular}{ c c c }
\toprule 
\multicolumn{3}{c|}{$F_{k,i}(r)$}
\\ 
\midrule
 & $i=0$ & $i=1$  \\ 
\midrule
$k=0$ & $re^{-r}$  &  $-\frac{1}{3}e^{-r}$ 
\\
$k=1$ & $r^2e^{-r}$   &  $e^{-r}(\frac{-1}{5}r + \frac{-1}{5})$ 
\\
$k=2$ & $(\frac{1}{3}r^2 + \frac{1}{3}r^3)e^{-r} $  &  
$(\frac{-1}{7}r + \frac{-1}{42}r^2 + \frac{1}{42}r^3 + \frac{-1}{7})e^{-r}$ 
\\
\vdots & & \\
\bottomrule
\toprule  
\multicolumn{3}{|c}{$G_{k,i}(r')$} 
\\ 
\midrule
 & $i=0$ & $i=1$.  \\ 
\midrule
 $k=0$ & $1 + \frac{1}{6}r'^2 + \frac{1}{120}r'^4 + \frac{1}{5040}r'^6$  &   $r'^2 + \frac{1}{10}r'^4 + \frac{1}{280}r'^6$
\\
$k=1$ & $1 + \frac{1}{10}r'^2 + \frac{1}{280}r'^4 + \frac{1}{15120}r'^6$ & $r'^2 + \frac{1}{14}r'^4 + \frac{1}{504}r'^6$
\\
$k=2$ &  $1 + \frac{-1}{504}r'^4$ &  $r'^2 + \frac{1}{18}r'^4$\\
\vdots & & \\
\end{tabular}
\end{small}
\end{center}
\caption{Note that we are using $K(r)$ as shorthand for $K(|\textbf{r}'-\textbf{r}|)$. For $K(r)=e^{-r}$ we have $\mathcal{K}^{(k)}(r,r') = F_{k,1}G_{k,1} + F_{k,2}G_{k,2}$.}
\label{table:exp_decomp}
\end{table}

Here we remark on some beneficial properties of the term $\mathcal{K}^{(k)}(r',r)$ in our expansion. We define $\mathcal{R}_k$ to be the smallest number such that there exist functions $F_{k,i},G_{k,i}$ that satisfy
\begin{equation}\label{eq:f_g}
\mathcal{K}^{(k)}_p(r',r) = \sum_{i=1}^{\mathcal{R}_k} F_{k,i}(r)G_{k,i}(r'). \end{equation}
The motivation for focusing on this number is that it directly impacts the size $\mathcal{P}$ of our expansion, and hence the efficiency of our compression. In the case of $K(r)=1/r$ we have $\mathcal{K}^{(k)}(r',r)=r'^k/r^{k+1}$ and so
$F_{k,1}(r) = \frac{1}{r^{k+1}}, G_{k,1}(r') = r'^k,$ and  $\mathcal{R}_k = 1$.
For general kernels, we only have $\mathcal{R}_k \leq \lfloor\frac{p-k+2}{2}\rfloor$. 

However, it is possible for us to automatically detect when $F_{k,i},G_{k,i}$ exist so that $\mathcal{R}_k$ in~\eqref{eq:f_g} is smaller. Consider a kernel which satisfies the differential equation $K'(r) 
=
q(r)K(r)$,
where $q$ is a Laurent polynomial. In this case, the $m$th derivatives of the kernel will result in products of Laurent polynomials and the kernel itself, and hence the kernel may be pulled completely out of the double sum defining $\mathcal{K}^{(k)}_p$, yielding a binomial in $r'$ and $r$
\[\mathcal{K}^{(k)}_p(r',r) = 
K(r)
 \sum_{j}
\sum_{m}
r'^j
r^{m}A_{j,m},
\]
where the $A_{j,m}$ coefficients are computed based on the $\mathcal{T}_{jkm}^{(\alpha)}$ terms in the FKT expansion and the coefficients of the Laurent polynomial $q$. The sums over $j,m$ are finite and their range depends on the powers of the argument in the Laurent polynomial.
If the $A_{j,m}$ are rational, then a concise representation of the form~\eqref{eq:f_g} may be found in the following way:
(i) insert the coefficients $A_{j,m}$ into a matrix with rows and columns corresponding to the respective powers of $r$ and $r'$ in the binomial, (ii) perform a rank-revealing QR factorization~\citep{businger1965linear,chan1987rank} of the matrix but skip the normalization step so that all entries remain rational, and (iii) recover the functions $F_{k,i}$ from the coefficients in $Q$ and the functions $G_{k,i}$ from the coefficients in $R$. Because the entries remained rational, the rank found will exactly be the sought value of $\mathcal{R}_k$.

In our implementation, we automatically perform this computation of $\mathcal{R}_k, F_{k,i}(r)$, and $G_{k,i}(r')$ as a pre-computation when the given kernel satisfies $K'(r)=q(r)K(r)$ (this is indicated by a user-toggled flag). In order to keep entries rational during the factorization, we use a special Rational type within the Julia language rather than standard floating point operations. Although we find $\mathcal{R}_k = \lfloor \frac{p-k+2}{2}\rfloor$ for the squared exponential, we do see significant reductions in the size of the expansion for other kernels, notably Mat\'ern kernels. 
See Table~\ref{table:ranks} for some values of $\mathcal{R}_k$ for various kernels and dimensions, and Table~\ref{table:exp_decomp} for the functions $F_{k,i}$ and $G_{k,i}$ for the exponential kernel, for which $\mathcal{R}_k=2$.
\section{Additional Information for Experiments}
\subsection{Additional implementation details}
The major components of our implementation of the FKT are the tree decomposition and the population of the $s2m$ and $m2t$ matrices. Here we make additional comments on the latter, in which the novelty of the FKT is most manifest. 

To compute the $s2m$ and $m2t$ matrices requires (i) computation of hyperspherical harmonics, (ii) computation of the $m$th derivative of $K$ evaluated at $r$ in~\ref{eq:finaltrunc}, and (iii) computation of the $\mathcal{T}_{k,j,m}$ coefficients. (i) is a complicated expression of cosines, sines, and complex exponentials, which is daunting but doable with standard function calls in Julia, and our implementation aims to do this work in as vectorized a fashion as possible. (iii) is a similar task, although we remark that the coefficients do not depend on the data and can be stored once computed. (ii) is where auto-differentiation is leveraged\textemdash assuming the user has written their kernel in a format consumable by TaylorSeries.jl (e.g. \texttt{$kernel(r) = exp(-r^2)$}), then the tools from that package can compute any order derivative evaluated at any valid point. 
\subsection{Further error results for synthetic experiments}\label{sec:moreerror}
We performed the accuracy measurement experiment detailed in the section of the main text concerning synthetic experiments for many kernels in many dimensions. The results are presented in Table~\ref{table:error}. Notably the error is not significantly impacted by dimension (an observed increased accuracy with $d$ may be due to the experimental setup exploring relatively less of the space of function arguments), and shows consistent exponential decrease with the truncation parameter $p$.

We also remark that oscillatory kernels are known to have higher ranks for off-diagonal blocks. In kernel-independent FMMs which use factorizations of subblocks of the matrix, the result of attempting to compress a kernel matrix whose kernel has high-frequency oscillations is that little compression is achieved, accuracy is maintained, and runtime is comparable to a dense operation. For the FKT, the result would be consistent compression and runtime, but accuracy lost (since the interactions being compressed are not low-rank, as is assumed for the method). The user may, acknowledging this behavior of the kernel matrix, increase the truncation parameter so that accuracy is maintained at the cost of runtime, but our implementation of the FKT currently has no hooks to automatically detect the need for this. A wealth of literature exists for these kernels (c.f. \citet{dfmm}), and it is likely that an analogous extension of the FKT to incorporate considerations present in the directional FMM would improve performance with highly oscillatory kernels. 


\begin{table}
\begin{center}
    \begin{tabular}{ |l|l|l|l|l||l|l|l|l| }%
\multicolumn{9}{c}{Maximum Absolute Error} \\
      \hline%
      Kernel & 
      \multicolumn{4}{|c||}{$K(r)=e^{-r}$}
      &
       \multicolumn{4}{|c|}{$K(r)=\cos{r}/r$}
       \\
      \hline 
      Dim. & $3$ &  $6$ & $9$ &  $12$ & $3$ &  $6$ & $9$ &  $12$ \\ \hline%
      $p=3$ &  1.03e-2 &  1.02e-2 & 1.02e-2 &  1.02e-2   & 5.44e-2 &  3.07e-2 & 3.07e-2 &  3.06e-2 \\ %
      $p=6$ & 7.32e-4 &  6.78e-4 &  6.52e-4&  6.56e-4     & 7.60e-3 &  2.74e-3 & 2.01e-3 &  2.00e-3 \\ %
      $p=9$ &  5.48e-5 &  5.47e-5 & 5.40e-5 & 5.02e-5   & 7.68e-4 &  3.65e-4 & 2.34e-4 &  1.93e-4 \\ %
      $p=12$ & 4.62e-6 &  4.57e-6& 4.59e-6 & 4.31e-6    & 6.03e-5 &  3.23e-5 & 3.06e-5&  2.01e-5 \\ %
      $p=15$ & 4.25e-7 &  4.24e-7& 4.20e-7 & 3.98e-7    & 9.92e-6 &  3.48e-6 & 3.05e-6 &  2.59e-6 \\ %
      $p=18$ & 4.14e-8 & 4.14e-8 & 4.04e-8 &  4.04e-8    & 1.70e-6 &   5.23e-7 & 3.12e-7 &  2.82e-7 \\ \hline%
            \hline 
Kernel & 
      \multicolumn{4}{|c||}{$K(r)=(1+r^2)^{-1}$}
      &
       \multicolumn{4}{|c|}{$K(r)=e^{-r^2}$}
       \\
      \hline 
      Dim. & $3$ &  $6$ & $9$ &  $12$ & $3$ &  $6$ & $9$ &  $12$ \\ \hline%
      $p=3$ & 1.41e-2 &  1.41e-2& 1.41e-2 &  1.41e-2& 4.86e-2&  4.27e-2 & 2.95e-2 &  2.95e-2 \\ %
      $p=6$ & 2.17e-3 &  1.61e-3 &  1.11e-3&  1.11e-3& 9.42e-3 &  7.85e-3 & 4.91e-3 &  4.86e-3 \\ %
      $p=9$ &  1.58e-4 & 1.42e-4 & 1.39e-4&  9.51e-5 &9.32e-4 &  5.45e-4& 5.40e-4 &  3.87e-4 \\ %
      $p=12$ & 1.71e-5 &  1.54e-5 & 1.19e-5 &  8.29e-6& 4.80e-5 & 4.10e-5 & 4.10e-5&  2.64e-5 \\ %
      $p=15$ & 1.62e-6 &  1.27e-6 & 9.35e-7 & 9.18e-7& 2.29e-6 &  2.29e-6 & 1.96e-6 &  1.51e-6 \\ %
      $p=18$ & 1.39e-7 &  1.02e-7 & 7.69e-8 &  6.40e-8& 9.88e-8 & 9.88e-8 & 6.39e-8 &  4.07e-8 \\ \hline%
    \end{tabular}
\end{center}
\caption{Experimentally observed errors which are calculated for the $p=4$  FKT approximation by taking the maximum absolute error of the truncated expansion for 1000 randomly selected pairs of points $\textbf{r}',\textbf{r}$ satisfying $|\textbf{r}'|=1,|\textbf{r}|=2$.}
\label{table:error}
\end{table}

\subsection{Gaussian Processes}
A Gaussian Process (GP) is a distribution over functions
whose finite-dimensional marginal distributions 
are distributed according to a multivariate normal law.
That is, for any sample $f$ of a GP, 
and any finite set of inputs $\mathbf{X}$, we have
$
f(\mathbf{X}) \sim \N(\boldsymbol \mu_{\mathbf{X}}, \boldsymbol \Sigma_{\mathbf{X}})$,
for some mean vector $\boldsymbol \mu_{\mathbf{X}}$ and covariance matrix $\boldsymbol \Sigma_{\mathbf{X}}$.
In fact, analogous to the multivariate case, a GP is completely defined by its first and second moments:
a mean function $\mu(\cdot)$
and a covariance kernel $\kappa(\cdot, \cdot)$,
also known as a kernel.
In particular, if $f \sim \GP(\mu, \kappa)$ 
then for any finite collection of inputs $\mathbf{X}$,
\begin{equation}
f(\mathbf{X}) \sim \N(\mu(\mathbf{X}), \kappa(\mathbf{X}, \mathbf{X})),
\end{equation}
where $\kappa(\mathbf{X}, \mathbf{X})$ is the matrix whose $(i,j)^\text{nth}$ entry is $\kappa(\mathbf{X}_i, \mathbf{X}_j)$.
Fortunately, the posterior mean $\mu_p$ and posterior covariance $\kappa_p$ of a GP
conditioned on observations with normally-distributed noise have closed forms
and only require linear algebraic operations:
\begin{equation}
\label{eq:gp_posterior}
    \begin{aligned}
        \mu_p(\mathbf{X}_*) &= \mu(\mathbf{X}_*) + \kappa(\mathbf{X}_*, \mathbf{X}) \mathbf{\Sigma}_{\mathbf{X}}^{-1} (\mathbf{y} - \mu(\mathbf{X})),\\
        \kappa_p(\mathbf{X}_*, \mathbf{X}_*') &= \kappa(\mathbf{X}_*, \mathbf{X}_*') - \kappa(\mathbf{X}_*, \mathbf{X})  \mathbf{\Sigma}_{\mathbf{X}}^{-1} \kappa(\mathbf{X}, \mathbf{X}_*'),\\
    \end{aligned}
\end{equation}
where,
$\mathbf{\Sigma}_{\mathbf{X}} = k(\mathbf{X}, \mathbf{X}) + \sigma_y^2 \mathbf{I}$ and $\sigma_y$ is the standard error of the target $\mathbf{y}$.
We use the first formula to calculate the predictive mean of a GP for the oceanographic data in the main text using FKT.
For more background on Gaussian processes, see \citep{rasmussen2005gpml}.






\section{Existing Codes, GPU acceleration, and GP specific improvements}

\citet{deisenroth2015distributed} introduced
the robust Bayesian Committee Machine (rBCM)
which trains local GP "experts" on subsets of the data and combines their predictions.
All computations of rBCM can be carried out in a distributed fashion, but no constituent model is trained on the entire data.
\citet{de2017gpflow} introduced GPflow, a GP library based on accelerating variational inference procedures with GPUs via the TensorFlow framework.
GPyTorch is also a GPU-accelerated library, but is based on PyTorch and instead of variational inference, expresses all GP inference equations via MVMs \citep{gardner2018gpytorch},
relying on stochastic estimators to compute log-determinant and trace terms \citep{dong2017scalable}.

\end{document}